\newtheorem{theorem}{Theorem}
\newtheorem{lemma}[theorem]{Lemma}
\newtheorem{remark}{Remark}
\DeclareMathOperator*{\argmin}{arg\,min}
\DeclareMathOperator*{\argmax}{arg\,max}
\begin{document}

\title{\bf A Representation Theory Perspective on Simultaneous Alignment and Classification}
\author{Roy R. Lederman\thanks{roy@math.princeton.edu, The Program in Applied and Computational Mathematics, Princeton University, Princeton, NJ, USA} and Amit Singer\thanks{amits@math.princeton.edu, The Department of Mathematics and The Program in Applied and Computational Mathematics, Princeton University, Princeton, NJ, USA} }

\maketitle

\begin{abstract}

One of the difficulties in 3D reconstruction of molecules from images in single particle Cryo-Electron Microscopy (Cryo-EM), in addition to high levels of noise and unknown image orientations, is heterogeneity in samples: in many cases, the samples contain a mixture of molecules, or multiple conformations of one molecule.
Many algorithms for the reconstruction of molecules from images in heterogeneous Cryo-EM experiments are based on iterative approximations of the molecules in a non-convex optimization that is prone to reaching suboptimal local minima. Other algorithms require an alignment in order to perform classification, or vice versa. 
The recently introduced Non-Unique Games framework provides a representation theoretic approach to studying problems of alignment over compact groups, and offers convex relaxations for alignment problems which are formulated as semidefinite programs (SDPs) with certificates of global optimality under certain circumstances.
In this manuscript, we propose to extend Non-Unique Games to the problem of simultaneous alignment and classification with the goal of simultaneously classifying Cryo-EM images and aligning them within their respective classes. 
Our proposed approach can also be extended to the case of continuous heterogeneity.

\end{abstract}

%
%
%
\section{Introduction}\label{sec:intro}

A Non-Unique Game (NUG) is an optimization problem or a statistical estimation problem, 
of inferring $n$ elements of a group $g_1,\ldots ,g_n \in \mathcal{G}$ by minimizing an expression
of the form 
\begin{equation}\label{eq:NUGdef}
  \underset{g_1,\ldots ,g_n \in \mathcal{G}}{\argmin} \sum_{i,j=1}^n f_{ij}(g_i g_j^{-1}) .
\end{equation}
where $f_{ij}: \mathcal{G} \rightarrow \mathbb{R}$ are penalty functions for particular pairwise relations $g_i g_j^{-1}$ between elements. 
This problem arises in Multireference Alignment discussed in \cite{bandeira2014multireference},
and in more general settings discussed in \cite{bandeira2015non}; 
A convex relaxation of the problem, proposed in  \cite{bandeira2015non}, can be solved using Semidefinite Programming (SDP).
One of the applications where NUGs and associated algorithms have been of particular interest
is Cryo-electron microscopy (Cryo-EM) \cite{frank2006three,van2000single}, where multiple noisy 2D projections (images) from unknown directions of an unknown 3D molecule must be aligned over SO(3), as a step in reconstructing the molecule. 
 
Cryo-EM has been named Method of the Year 2015 by the journal Nature Methods due to the breakthroughs that the method facilitated in mapping the structure of molecules that are difficult to crystallize. 
One of the difficulties in Cryo-EM, which has been noted, for example, in the surveys accompanying the Nature Methods announcement \cite{doerr2016single,nogales2016development,glaeser2016good}, is heterogeneity in the sample:
in practice many samples contain two (or more) distinct types of molecules (or different conformations of the same molecule). 
Algorithms for Cryo-EM processing in the presence of heterogeneity (for example, \cite{herman2008classification,shatsky2010automated,scheres2010chapter,penczek2011identifying,katsevich2015covariance,anden2015covariance}) must therefore determine both the class of each image, and its alignment with respect to other images of the same class;
this often requires some initial educated guess of the structure of the molecules in the sample, iterative estimations of the structure, alignment and classification, 
or some method of performing one of the two tasks of alignment and classification before the other task.

In this manuscript we propose to solve the classification and alignment problems simultaneously.
This approach is based on the observation that both alignment and classification are problems over compact groups, and that the direct product of these groups is also a compact group. 

We reformulate the problem as an optimization problem over the direct product of the groups,
and reduce it to a NUG. 
In addition, we discuss some of the symmetries in the problem, which are exploited to reduce the size of the optimization problem. 
Furthermore, we propose an approach for controlling the size of the classes.

The approach can be generalized to simultaneous alignment and parametrization, in the case of continuous heterogeneity (which will be discussed in a future paper).

This manuscript is organized as follows.
Section \ref{sec:pre} summarizes some standard results used in this manuscript, as well as some previous work on NUGs. 
Section \ref{sec:analysis} contains a more detailed description of the problem and applications, 
and the derivation of the main arguments in this manuscript.
In Section \ref{sec:alg} we propose algorithms for simultaneous alignment and classification.
Section \ref{sec:res} contains experimental results for the case of simultaneous alignment and classification on SO(2). 
In Section \ref{sec:conclusion} we summarize our conclusions and briefly discuss generalizations and future work.

\section{Preliminaries}\label{sec:pre}

\subsection{The Cryo-EM Problem}

Electron Microscopy is an important tool for recovering the 3D structure of molecules.
Of particular interest in the context of this manuscript is Single Particle Reconstruction (SPR), 
and more specifically, Cryo-EM,
where multiple noisy 2D projections, ideally of identical particles in different orientations,
are used in order to recover the 3D structure.
The following formula is a simplified imaging model of SPR
\begin{equation}\label{eq:cryoprojection:model}
  \left( \mathcal{P}_R\mathcal{X} \right)(x,y) = \int_{z} \mathcal{X}(Rr) dz
\end{equation}
where $r=(x,y,z)$, $R$ is some random rotation matrix in SO(3), $\mathcal{X}$ is the scattering density of the molecule, and $\mathcal{P}$ is the projection operator.
In other words, the model is that the molecule is rotated in a random direction, and the image obtained is the top-view projection of the rotated molecule, integrating out the $z$ axis.
Indeed, one of the defining properties of SPR and Cryo-EM is that the orientation $R$ of the molecule in each image is unknown, unlike other tomography techniques, where the rotation angles are recorded with the measurements. 
The analysis of Cryo-EM images is further complicated by extremely high levels of noise, far exceeding the signal in magnitude, which makes it difficult not only to analyze the particles in the images, but also to locate the particles in the micrographs produced.
Sample images are presented in Figure \ref{fig:cryosample}.
More detailed discussions of these challenges, and various other challenges, such as the contrast transfer functions (CTF) applied to the images in the imaging process, can be found, for example, in \cite{frank2006three}.

The reconstruction of the molecule (or, more precisely, the density  $\mathcal{X}$) from the images obtained in Cryo-EM requires an estimate of the rotation angles of the images. 
The Fourier Slice Theorem (see, for example, \cite{natterer1986mathematics}) provides a way to estimate these rotations from the common lines between the images (see, for example \cite{van1987angular,singer2010detecting,singer2011three,shkolnisky2012viewing}, and Figure \ref{fig:common}).
In the context of this manuscript, we assume that for every pair of images $i$ and $j$, 
we have some function $f_{ij}(g)$ which corresponds to the ``incompatibility'' between the images $i$ and $j$ for every relative orientation $g \in SO(3)$; this function is a measure of the discrepancy  between the radial line in the Fourier transform of image $i$ and the radial line in the Fourier transform of image $j$ which would have corresponded to the common line between the plane of $i$ and the plane of $j$, if the relative orientation of the two images had been $g$.
Had there not been noise, we would have expected that $f_{ij}(g_{ij})=0$ for the true relative orientation $g_{ij}$ between image $i$ and image $j$,  and $f_{ij}(g) > 0$ for every other $g$ (in fact, $f_{ij}(\tilde{g}_{ij})=0$ for every $\tilde{g}_{ij}$ that yields the same common lines for the pair of images as $g_{ij}$ since various rotations can yield the same common line. The ambiguity is resolved, up to reflections, only by adding a third image). 
In practice, due to the high levels of noise, $f_{ij}$ need not be $0$ at $g_{ij}$, and in fact, the value of $f_{ij}$ may even not be minimized at $g_{ij}$. 
However, the expected value of $f_{ij}$ is  lower for the true $g_{ij}$ than it is for other relative rotations. 
For more details about this ``penalty'' function in the context of this manuscript, see \cite{bandeira2015non}.

\begin{figure}
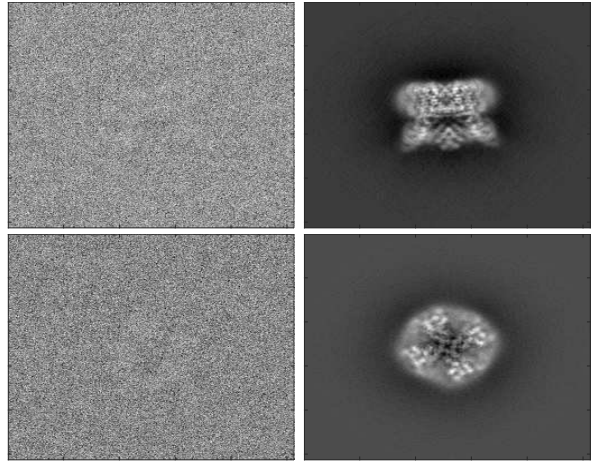

\begin{center}
\includegraphics[width=1.5in,trim={0.3in 0.15in 0 0},clip]{figures/raw1_tv}%
~\includegraphics[width=1.5in,trim={0.3in 0.15in 0 0},clip]{figures/closest1_tv}%

\includegraphics[width=1.5in,trim={0.3in 0.15in 0 0},clip]{figures/raw2_tv}%
~\includegraphics[width=1.5in,trim={0.3in 0.15in 0 0},clip]{figures/closest2_tv}%
\end{center}
\caption{Left: two raw experimental images of  TRPV1, available via EMDB 5778 \cite{liao2013structure}. Right: computed projections of TRPV1 which are the closest to the images on their left. }\label{fig:cryosample}
\end{figure}

\begin{figure}
\begin{center}
\includegraphics[trim=1in 0 0 0]{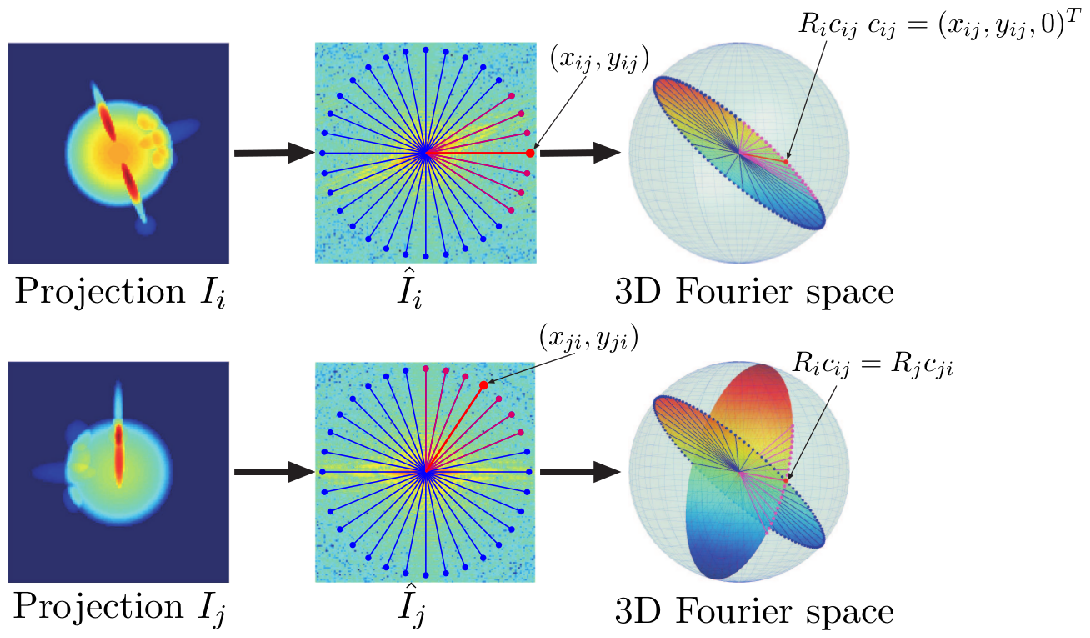}%
\end{center}
\caption{Common Lines in Cryo-EM. The left most images $I_i$ and $I_j$ are examples of projections of a molecule density $\mathcal{X}$; each projection is obtained from a different direction. At the center, are the Fourier transforms $\hat{I}_i$ and $\hat{I}_j$ of those images, overlaid with radial lines. 
The lower left sub-figure is a visualization of the two slices of the 3D Fourier transform of the 3D density $\mathcal{X}$, corresponding to $\hat{I}_i$ and $\hat{I}_j$; the two slices intersect each other, so that there is a line in $\hat{I}_i$ that is identical to a line in $\hat{I}_j$ (assuming no noise).
Indeed, the point $(x_{ij},y_{ij})$ which lies along this common line in $\hat{I}_i$ is identical to the point $(x_{ji},y_{ji})$ which lies along this common line in $\hat{I}_j$.
A more detailed discussion of common lines is available, for example, in \cite{van1987angular,singer2010detecting,singer2011three,shkolnisky2012viewing} }\label{fig:common}.
\end{figure}

\subsection{The Heterogeneity Problem in Cryo-EM}

So far, we have assumed that all the molecules being imaged in an experiment are identical copies of each other, so that all the images are projections of identical copies, from different directions. 
However, in practice, the molecules in a given sample may differ from one another for various reasons. 
For example, the sample may contain several types of different molecules due to some contamination or feature of the experiment. Alternatively, the molecules which are studied may have several different conformations or states, or some local variability (see example in Figure \ref{fig:cryosample:heterogeneity}). 
The heterogeneity may be discrete (e.g. for distinct different molecules) or continuous (for molecules with continuous variability). 

When there is heterogeneity in the samples, high resolution reconstruction of the molecules requires not only an estimate  of the  rotation of each image, but also classification of the images into clusters, each corresponding to a different molecule which is to be reconstructed separately. 
Some of the existing SPR analysis methods rely on some prior knowledge of the underlying molecules and on iterative processes of estimating the structure of the molecules and matching images to those estimates (e.g. \cite{sigworth2010chapter,scheres2010chapter}), and others require some method of recovering the rotation of the images although the images reflect mixtures of projections of different molecules (e.g. \cite{katsevich2015covariance,anden2015covariance}).

\begin{figure}
\begin{center}
\includegraphics[width=4in]{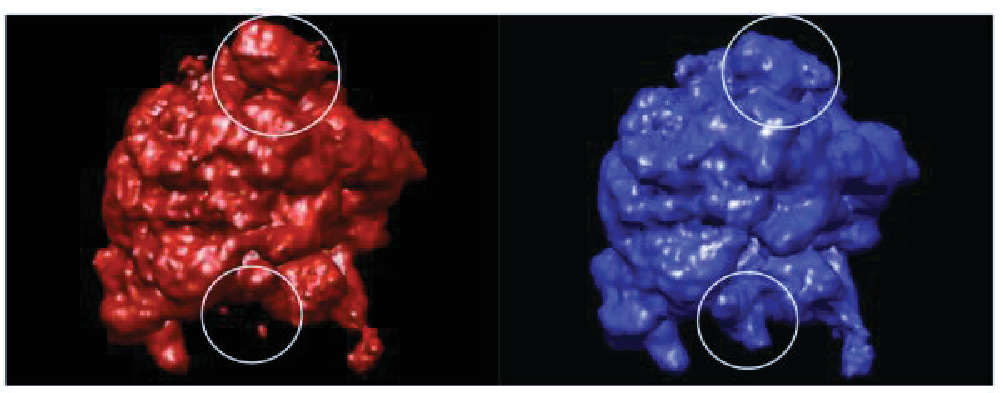}%
\end{center}
\caption{Classical (left) and hybrid (right) states of 70S E. Coli ribosome (image source: \cite{liao2010classification}). }\label{fig:cryosample:heterogeneity}
\end{figure}

\clearpage

%
%
%

\begin{table}[h!]\caption{Table of Notation}
\begin{center}
\begin{tabular}{r c p{10cm} }
\toprule
$ A^*$ & ~ &  the complex conjugate transpose of the matrix $A$\\
$ \mathbb{Z}_M$ & ~ &  the cyclic group of order $M$\\
$ \mathcal{G} \times \mathcal{A} $ & ~ &  the direct product between group $\mathcal{G}$ and group $\mathcal{A}$ \\
$ g \circ f$ & ~ &  the action of $g\in \mathcal{G}$ on a function $f \in L^2(\mathcal{Y})$: $(g \circ f)(x) = f(g^{-1}x)$ \\ 
$tr(A)$ & ~ & the trace of the matrix $A$  \\
$A \otimes B$ & ~ & the Kronecker (tensor) product of the matrix $A$ and the matrix $B$ \\
\bottomrule
\end{tabular}
\end{center}
\label{tb:notation}
\end{table}

\subsection{Irreducible Representations of Groups}\label{sec:pre:irred}

The purpose of sections \ref{sec:pre:irred}, \ref{sec:pre:zm} and \ref{sec:pre:prod}  is to briefly review some standard results in group theory and harmonic analysis; more detailed discussions of these facts can be found, for example, in \cite{coifman1968representations,s1995group,dym1985fourier}.

Suppose that $\mathcal{G}$ is a compact group and $f\in L^2(\mathcal{G})$, then
by the Peter-Weyl Theorem \cite{peter1927vollstandigkeit}, the generalized Fourier expansion of $f$ is
\begin{equation}\label{eq:Peter-Weyl}
  f(g) = \sum_k d_k tr \left( \hat{f}^{(k)} \rho_{k}(g) \right),
\end{equation}
where the matrices $\rho_{k}(g)$ are the irreducible representations of $\mathcal{G}$,
$d_k$ is the dimensionality of the $k$th representation, 
and the matrices $\hat{f}^{(k)}$ are the Fourier coefficients of $f$, defined by the formula
\begin{equation}\label{eq:Peter-Weyl:coeff}
  \hat{f}^{(k)} = \int_{\mathcal{G}} f(g) \rho_{k}^*(g) dg ,
\end{equation}
with $dg$ the Haar measure on $\mathcal{G}$ normalized so that
\begin{equation}\label{eq:haar:norm}
  \int_{\mathcal{G}} dg =1 .
\end{equation}

\begin{remark}\label{rm:abelian:dim}
For abelian groups, such as SO(2) (shifts on a circle), $d_k=1$ for all $k$. 
However, in SO(3), which is of particular interest in the Cryo-EM application, $d_k = 2k+1$ with $k=0,1,2,\ldots$.
\end{remark}

The integration of any irreducible representation with respect to the Haar measure yields the zero matrix, 
except for the case of the trivial constant irreducible representation $\rho_{0}$:
\begin{equation}\label{eq:haar:avg}
  \int_\mathcal{G} \rho_{k}(g) dg = 0 ~~~~~ \forall k \ne 0 .
\end{equation}

The following are well known properties of irreducible and unitary representations of compact groups:
\begin{equation}\label{eq:irred:prod}
  \rho_{k}(g_1 g_2) = \rho_{k}(g_1 ) \rho_{k}(g_2) ,
\end{equation}

\begin{equation}\label{eq:irred:inv}
  \rho_{k}(g^{-1}) = \rho_{k}^*(g) .
\end{equation}

\subsection{Special Cases: SO(2) and $\mathbb{Z}_M$}\label{sec:pre:zm}

In the special case where $\mathcal{G} = \mathbb{Z}_M$ (discrete cyclic group of $M$ elements),
there is a finite set of $M$ irreducible representations, 
and all the irreducible representations are of dimensionality one (scalar rather than a matrix).
The irreducible representations $\{\eta_m\}_{m=0}^{M-1}$ of $\mathbb{Z}_M$ are
\begin{equation}
\eta_m(a) = e^{\mathrm{i} 2 \pi a m/M} ~~,~~ a=0,1,\ldots,m-1.
\end{equation}
The Fourier coefficients of a function over $\mathbb{Z}_M$ are simply the discrete Fourier transform (DFT) of the function (with the appropriate normalization (\ref{eq:haar:norm})).

In the special case where $\mathcal{G} = SO(2)$,
there is an infinite set of irreducible representations, 
and all the irreducible representations are of dimensionality one.
The irreducible representations $\{\eta_k\}_{k=-\infty}^{\infty}$ of SO(2) are
\begin{equation}\label{eq:irred:so2}
\eta_k(a) = e^{\mathrm{i} a k} ~~,~~ a \in [0, 2\pi) .
\end{equation}

\begin{remark}
For the sake of brevity, and with a small abuse of notation, we will use elements of the groups 
$\mathbb{Z}_M$ and SO(2) and integers and angles interchangeably.
For example, in (\ref{eq:irred:so2}), the variable ``$a$'' can denote an element of SO(2) or an angle.
Therefore,  $a_1 a_2^{-1}$ would mean the same as $a_1 - a_2$, with the former in group notation and the latter in angle notation; $a = \mathrm{e}$ (where $\mathrm{e}$ is the identity element) in group notation means the same as $a=0$ in angle notations.
The appropriate interpretation, group element or integers and angles, is obvious from the context or does not matter. 
\end{remark}

\subsection{Direct Products of Groups}\label{sec:pre:prod}

The direct product $\mathcal{G} \times \mathcal{A}$ of two compact groups $\mathcal{G}$ and $\mathcal{A}$ is also a compact group, which  has the elements 
$\{(g,a): g\in\mathcal{G}, a\in \mathcal{A}\}$. 
In this manuscript, we are particularly interested in the case $\mathcal{A} = \mathbb{Z}_M$.

The product of two elements of  $\mathcal{G} \times \mathcal{A}$ is defined 
in terms of elements in $\mathcal{G}$ and $\mathcal{A}$ by the following formula
\begin{equation}\label{eq:groupprod:act}
(g_i,a_i)  (g_j,a_j) = \left( g_i  g_j , a_i  a_j \right).
\end{equation}
It follows that 
\begin{equation}\label{eq:groupprod:inv}
(g_i,a_i)  (g_j,a_j)^{-1} = \left( g_i  g_j^{-1} , a_i  a_j^{-1} \right) .
\end{equation}

If $\eta_m(a)$ is an irreducible representation of $\mathcal{A}$ and $\rho_k(g)$ is an irreducible representation of $\mathcal{G}$,
then $\psi_{k,m}( (g,a) )$, defined by the formula
\begin{equation}
\psi_{k,m}( (g,a) ) = \rho_m(g) \otimes \eta_m(a),
\end{equation} 
is an irreducible representation of $\mathcal{G} \times \mathcal{A}$.
The irreducible representations $\psi_{k,m}( (g,a) )$ of $\mathcal{G} \times \mathbb{Z}_M$ are summarized in Table \ref{tb:prodrep:general};
in Table \ref{tb:prodrep:trivial} we substitute  $\eta_0(a)=1$ and $\rho_0(g)=1$  for the trivial irreducible representations of $\mathcal{A}$ and $\mathcal{G}$ respectively.
By Remark \ref{rm:abelian:dim}, the irreducible representations of abelian groups, like the irreducible representations $\eta_m$ of $\mathbb{Z}_M$, are one dimensional, so in this special case, the tensor product $\otimes$ can be replaced with the trivial 
product between the scalar valued function $\eta_m(a)$ and the (possibly) matrix valued function $\rho_k(g)$, as summarized in Table \ref{tb:directprod:aligncluster:ireps}.
  
  \begin{table}[h]
    \centering
  \begin{tabular}{ l l || c | c | c }
    $\psi_{k,m}\left( (g,a) \right)$       &  & $m$=0     & $m$=1     & $\cdots$          \\			
              &   & $\eta_0(a)$      & $\eta_1(a)$      & $\cdots$          \\
    \hline \hline 
  $k=0$ & $\rho_0(g)$ & $\rho_0(g)\otimes \eta_0(a)$ & $ \rho_0(g) \otimes \eta_1(a)$  & $\cdots$          \\
    $k=1$ & $\rho_1(g)$ & $\rho_1(g) \otimes \eta_0(a)$ & $ \rho_1(g) \otimes \eta_1(a)$  & $\cdots$      \\
    $k=2$ & $\rho_2(g)$ & $\rho_2(g) \otimes \eta_0(a)$ & $ \rho_2(g) \otimes \eta_1(a)$  & $\cdots$       \\
    $k=3$ & $\rho_3(g)$ & $\rho_3(g) \otimes \eta_0(a)$ & $ \rho_3(g) \otimes \eta_1(a)$  & $\cdots$   \\
    \vdots & \vdots & \vdots & $\ddots$  \\
  \end{tabular}
  \caption{Irreducible representations of $\mathcal{G} \times \mathcal{A}$ }\label{tb:prodrep:general}
  \end{table}

  \begin{table}[h]
    \centering
  \begin{tabular}{ l || c | c | c }			
    $\psi_{k,m}\left( (g,a) \right)$             & $\eta_0(a)=1$      & $\eta_1(a)$      & $\cdots$          \\
    \hline \hline 
 $\rho_0(g)=1$ &  1 &   $\eta_1(a)$  & $\cdots$          \\
    $\rho_1(g)$ & $\rho_1(g)$ & $ \rho_1(g) \otimes \eta_1(a)$  & $\cdots$      \\
    $\rho_2(g)$ & $\rho_2(g)$ & $ \rho_2(g) \otimes \eta_1(a)$  & $\cdots$       \\
     $\rho_3(g)$ & $\rho_3(g)$ & $ \rho_3(g) \otimes \eta_1(a)$  & $\cdots$   \\
    \vdots & \vdots & \vdots & $\ddots$  \\
  \end{tabular}
  \caption{Product irreducible representations, after substituting the trivial irreducible representations}\label{tb:prodrep:trivial}\label{tb:prodrep:scalar}
  \end{table}

  \begin{table}[h]
    \centering
  \begin{tabular}{ l || c | c | c | c}			
    $\psi_{k,m}\left( (g,a) \right)$             & $\eta_0(a)=1$      & $\eta_1(a)$      & $\cdots$    & $\eta_{M-1}(a)$      \\
    \hline \hline 
 $\rho_0(g)=1$ &  1 &   $\eta_1(a)$  & $\cdots$  & $\eta_{M-1}(a)$        \\
    $\rho_1(g)$ & $\rho_1(g)$ & $ \rho_1(g)  \eta_1(a)$  & $\cdots$ & $ \rho_1(g)  \eta_{M-1}(a)$    \\
    $\rho_2(g)$ & $\rho_2(g)$ & $ \rho_2(g)  \eta_1(a)$  & $\cdots$ & $ \rho_2(g)  \eta_{M-1}(a)$      \\
     $\rho_3(g)$ & $\rho_3(g)$ & $ \rho_3(g) \eta_1(a)$  & $\cdots$ & $ \rho_3(g)  \eta_{M-1}(a)$  \\
    \vdots & \vdots & \vdots & $\ddots$  & \vdots \\
  \end{tabular}
  \caption{Product irreducible representations in the special case of $\mathcal{G} \times \mathbb{Z}_M$, after plugging in the trivial irreducible representations}\label{tb:directprod:aligncluster:ireps}
  \end{table}

\clearpage
\subsection{Non-Unique Games (NUG)}

Let $\mathcal{G}$ be a compact group, and for every $1 \le i,j \le n$ let $f_{ij} \in L^2(\mathcal{G})$;  Non-Unique Games (NUG) are problems of the form (\ref{eq:NUGdef}).

\begin{remark}\label{rem:unique}
The solutions to Non-Unique Games are not unique, in the sense that if 
$g_1,\ldots ,g_n$ is a solution, then, $g_1g,\ldots ,g_ng$ is also a solution 
for any $g\in \mathcal{G}$, because $f_{ij} \left( g_ig (g_jg)^{-1}\right) =  f_{ij} \left( g_i g_j^{-1}\right)$.
The solution is therefore unique at most up to a global group element; the relative pairwise ratios  $g_{i}g_{j}^{-1}$ may be  unique.
\end{remark}

\subsubsection{Fourier Expansion of a NUG, and a Matrix Form}

Using the Fourier expansion (see (\ref{eq:Peter-Weyl})) of $f_{ij}$,
\begin{equation}
  f_{ij}(g_i g_j^{-1}) = \sum_{k=0}^\infty d_k tr \left( \hat{f}_{ij} \rho_k(g_i g_j^{-1}) \right) ,
\end{equation}
we rephrase (\ref{eq:NUGdef}) in the {\bf Fourier expansion form}: 
  \begin{equation}\label{eq:NUGdef:Fourier}
  \underset{g_1,\ldots ,g_n \in \mathcal{G}}{\argmin } \sum_{i,j=1}^n \sum_{k=0}^\infty d_k tr \left( \hat{f}_{ij}^{(k)} \rho_k(g_i g_j^{-1}) \right) .
  \end{equation}
For example, in the case of $\mathbb{Z}_M$, the Fourier coefficients of $f_{ij}$ are given by its DFT, and the NUG becomes
  \begin{equation}
  \underset{a_1,\ldots ,a_n \in \mathbb{Z}}{\argmin} \sum_{i,j=1}^n \sum_{m=0}^{M-1}  \hat{f}^{(k)}_{ij} e^{\mathrm{i} 2\pi m (a_i - a_j)/M}  .
  \end{equation}
Plugging (\ref{eq:irred:prod}) into (\ref{eq:NUGdef:Fourier}) yields
  \begin{equation}
  \underset{g_1,\ldots,g_n \in \mathcal{G}}{\argmin} \sum_{i,j=1}^n \sum_{k=0}^\infty d_k tr \left( \hat{f}^{(k)}_{ij} \rho_k (g_i) \rho_k^{*}(g_j) \right) .
  \end{equation}
The same expression can be rewritten in a {\bf block matrix form}:
  \begin{equation}\label{eq:NUG:Fourier:Block}
  \underset{g_1,\ldots,g_n \in \mathcal{G}}{\argmin} \sum_{k=0}^\infty tr \left( \hat{F}^{(k)} X^{(k)} \right) ,
  \end{equation}
where,  
  \begin{equation}\label{eq:NUG:Fourier:Block:structure1}
    \begin{array}{cc}
    X^{(k)} =  \begin{bmatrix} \rho_k (g_1) \\ \vdots \\ \rho_k (g_n) \end{bmatrix} \begin{bmatrix} \rho_k (g_1) \\ \vdots \\ \rho_k (g_n) \end{bmatrix}^* , 
    &
    \hat{F}^{(k)}= d_k \begin{bmatrix}  \hat{f}^{(k)}_{11} & \cdots & \hat{f}^{(k)}_{n1}   \\
    \vdots & \ddots & \vdots \\
    \hat{f}^{(k)}_{1n} & \cdots & \hat{f}^{(k)}_{nn} 
    \end{bmatrix} .
    \end{array}
  \end{equation}
Indeed, the $i,j$ block of the matrix $X^{(k)}$, which we denote by  $X^{(k)}_{ij}$,  is
\begin{equation}
  X^{(k)}_{ij} = \rho_k (g_i) \rho_k^{*}(g_j) = \rho_k(g_i g_j^{-1}) .
\end{equation}
Therefore, recovering the matrices $X^{(k)}_{ij}$ which take the above form is equivalent to recovering the ratio $g_i g_j^{-1}$ between pairs, which allows us to recover $g_1,\ldots ,g_n$ up to a global element. 
In other words, we have ``lifted'' the problem from the original variables $g_1,\ldots ,g_n$ to the block matrices,
where each block is associated with the ratio $g_i g_j^{-1}$ between a pair.

\subsubsection{Convex Relaxation of NUG}

We would like to convexify the NUG problem in order to use convex optimization theory and algorithms;
in this section we consider the convex relaxation of (\ref{eq:NUG:Fourier:Block}) and (\ref{eq:NUG:Fourier:Block:structure1}):
\begin{equation}\label{eq:NUG:convex}
\underset{ \{X^{(k)}\}_{k=0}^{\infty} }{\argmin}  \sum_{k=0}^\infty tr \left( \hat{F}^{(k)} X^{(k)} \right) 
\end{equation}
where the solution matrices $X^{(0)}, X^{(1)},\ldots $ are in the convex hull of 
the matrices defined in (\ref{eq:NUG:Fourier:Block:structure1}).

%
%
%
The following {\bf SDP relaxation}  has been proposed in \cite{bandeira2015non}:
  \begin{equation}\label{eq:NUG:convexrelax}
    \begin{array}{lll}
      \underset{X^{(0)},X^{(1)},\ldots }{\argmin} & \sum_{k=0}^\infty tr \left( \hat{F}^{(k)} X^{(k)} \right) & \\
      {\text{subject to}}  & X^{(k)} \succeq 0 & \forall k \\
                     & X_{ii}^{(k)} = I_{d_k \times d_k} & \forall k,i \\
                     & \sum_{k=0}^\infty d_k tr \left( \rho^*_k(g) X^{(k)}_{ij}  \right) \ge 0 & \forall 1 \le i,j \le n ~,~ \forall g\in\mathcal{G} \\
                     & X_{ij}^{(0)} = 1  & \forall 1\le i,j \le n \\
  \end{array}
  \end{equation}
Where,  
  \begin{equation}
    X^{(k)} =  \begin{bmatrix}  X_{11} & \cdots & X_{1n}   \\
    \vdots & \ddots & \vdots \\
    X_{n1} & \cdots & X_{nn} 
    \end{bmatrix} .
  \end{equation}
The constraints in (\ref{eq:NUG:convexrelax}) are designed to restrict $X^{(k)}$ in (\ref{eq:NUG:convexrelax}) to the convex hull of the matrices in (\ref{eq:NUG:Fourier:Block:structure1}).

\begin{remark}
When the expansion of the irreducible representations on $\mathcal{G}$ is infinite,
it must be truncated in practice.
The implementation of the non-negativity constraint 
$\sum_{k} d_k tr \left(  \rho_{(k)}^*\left(g\right) X^{(k)}_{ij}  \right) \ge 0 $
is not trivial.
The problem is discussed in \cite{bandeira2015non}, where  
$\mathcal{G}$ is sampled  and a non-negative kernel is applied. 
In some cases, Sum-of-Squares (SOS) constraints can also be used. 
The constraint, and possible improvements of it, are the subject of ongoing work.
\end{remark}

%
\section{NUG Formulation for Simultaneous Classification and Alignment}\label{sec:analysis}

\subsection{Motivating Example: Classification and Alignment over SO(2)}\label{sec:ex}

In this section we present the problem of multireference alignment on SO(2),
and a heterogeneity problem associated with it. 
This problem turns out to be simpler than the Cryo-EM problem in some fundamental ways which we will discuss in Section \ref{sec:res}, in the sense that there are tools available for approaching this problem that are not available in Cryo-EM; however, in the context of the NUG formulation, the problem has many of the features of the Cryo-EM problem.

Suppose that we have some periodic function $\psi : [0,2\pi) \rightarrow \mathbb{C}$ over SO(2), 
and suppose that we are given multiple copies of this function,
each shifted by some arbitrary angle. An example of such shifted copies is given in Figure \ref{fig:sig1:so2}.
If we want to recover the original function (up to cyclic shifts), we may choose an arbitrary copy, because all the copies are identical to the original function up to shifts.

Next, suppose that we have noisy shifted copies of the function (Figure \ref{fig:sig1:so2:noise}(a)).
If we wish to approximate the original function (up to shifts), we would align the noisy copies
(Figure \ref{fig:sig1:so2:noise}(b)) and then average them to cancel out the noise (Figure \ref{fig:sig1:so2:noise}(c)).
Of course, in order to do this we must somehow recover the correct shifts of all the copies together (up to some global shift). 
In the following sections, we will use a penalty function for different possible pairwise alignment;
for each pair of copies, we can define a ``compatibility penalty'' for different possible alignments,
for example (with slight abuse of notation), via the formula
\begin{equation}\label{eq:ex:fij}
  f_{i,j} \left( g \right) = \| \varphi_i - g \circ \varphi_j \|_2^2 = \frac{1}{2\pi} \int_0^{2\pi} | \varphi_i(\theta)-  \varphi_j(\theta- g) |^2 d\theta .
\end{equation}
An example of such compatibility penalty function is given in Figure \ref{fig:fij}.
When the shifts are unknown, the problem of aligning the signals is a NUG (see \cite{bandeira2015non,bandeira2014multireference}).

In the heterogeneity problem we have a mixture of prototype signals;
in this simplified example, let us assume that we have a mixture of noisy shifted versions
of two classes of functions $\psi_1$ and $\psi_2$, so that each sample is a shifted noisy version of either $\psi_1$ or $\psi_2$ as illustrated in the example in Figure \ref{fig:het}(a). 
If we knew both the class and shift of each sample, we could divide the samples into two classes,
and align them within each class (Figure \ref{fig:het}(b),(c)),
so that we could average within each class and approximate the two original signals (Figure \ref{fig:het}(d),(e)).

We know neither the shift nor the class of the samples;
we study the extension of NUG to this case of alignment in the presence of heterogeneity.

\begin{figure}[h!]
    \centering
    {\includegraphics[height=45mm,angle=0]{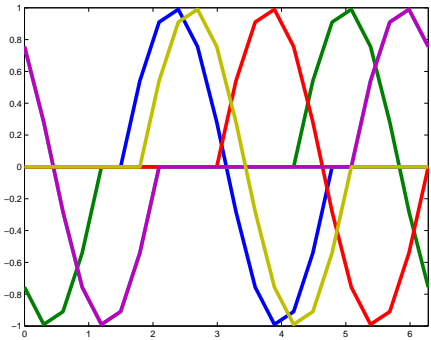}}
    \caption{Shifted copies of a function on SO(2)}\label{fig:sig1:so2}
  \end{figure}

  \begin{figure}[h!]
   \begin{subfloat}[Shifted noisy copies]
    {\includegraphics[width=40mm,angle=0]{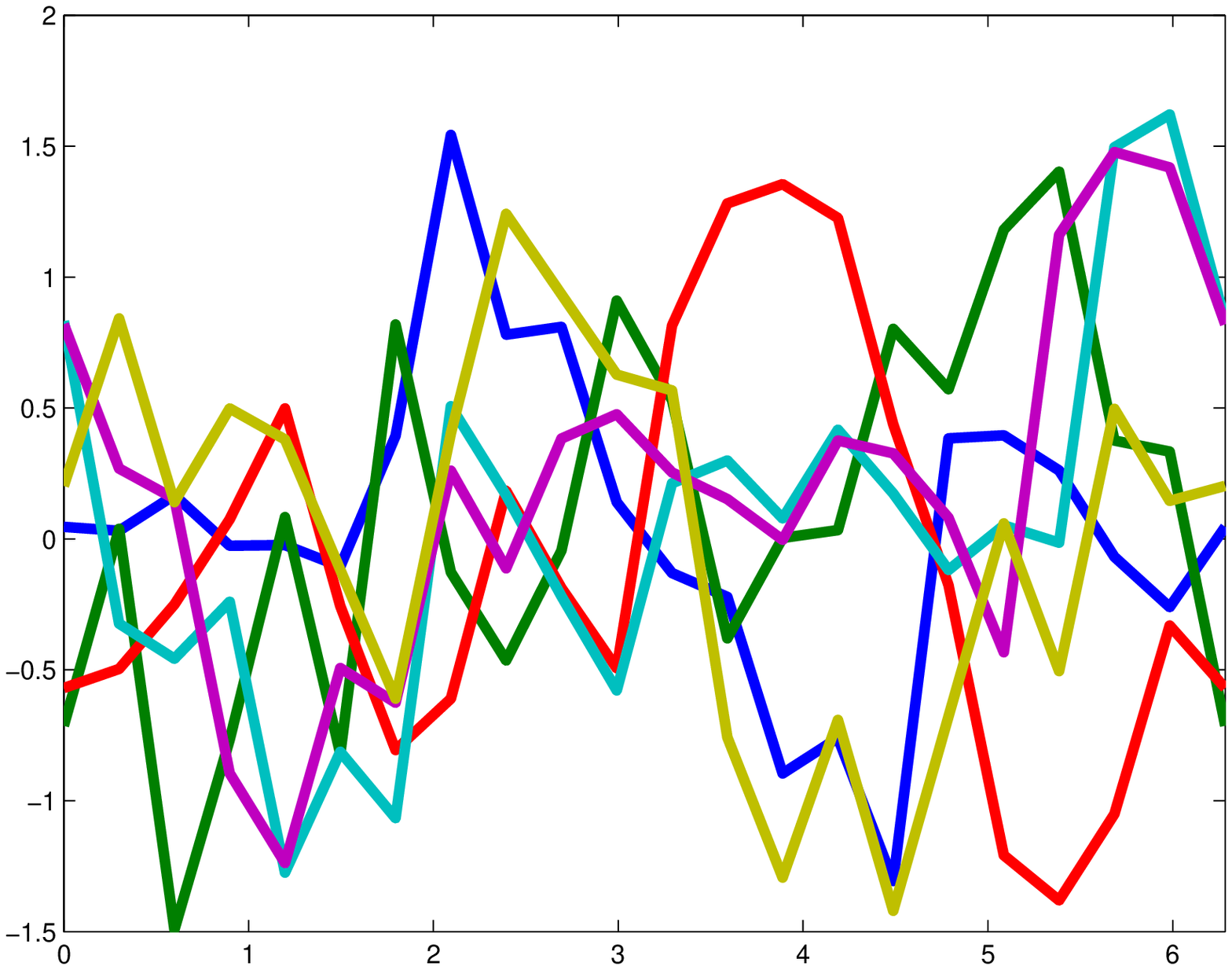}}
  \end{subfloat}	
  \begin{subfloat}[Aligned noisy copies]	
    {\includegraphics[width=40mm,angle=0]{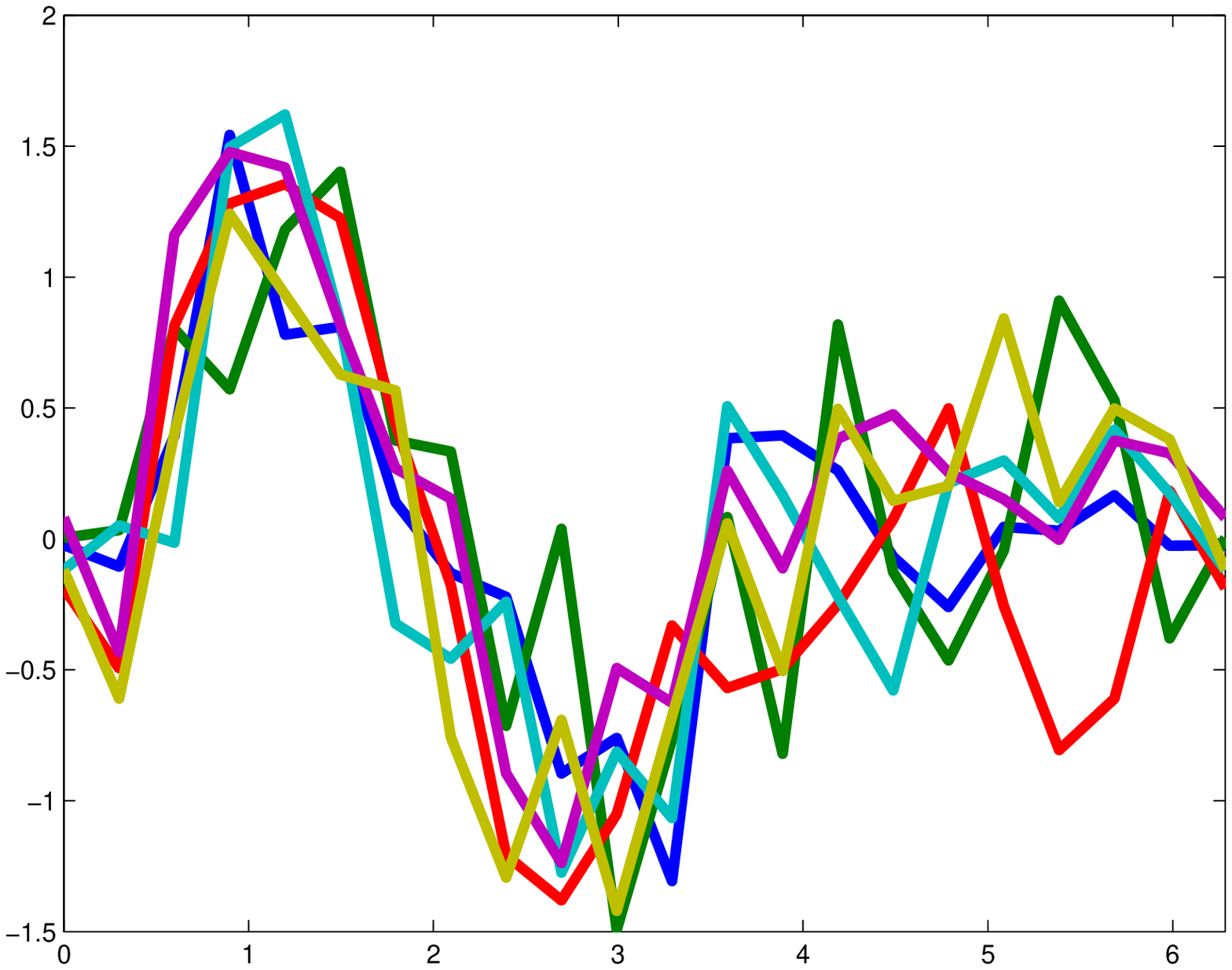}}
  \end{subfloat}	
  \begin{subfloat}[Averaged aligned copies (blue) vs. original function (red)]	
    {\includegraphics[width=40mm,angle=0]{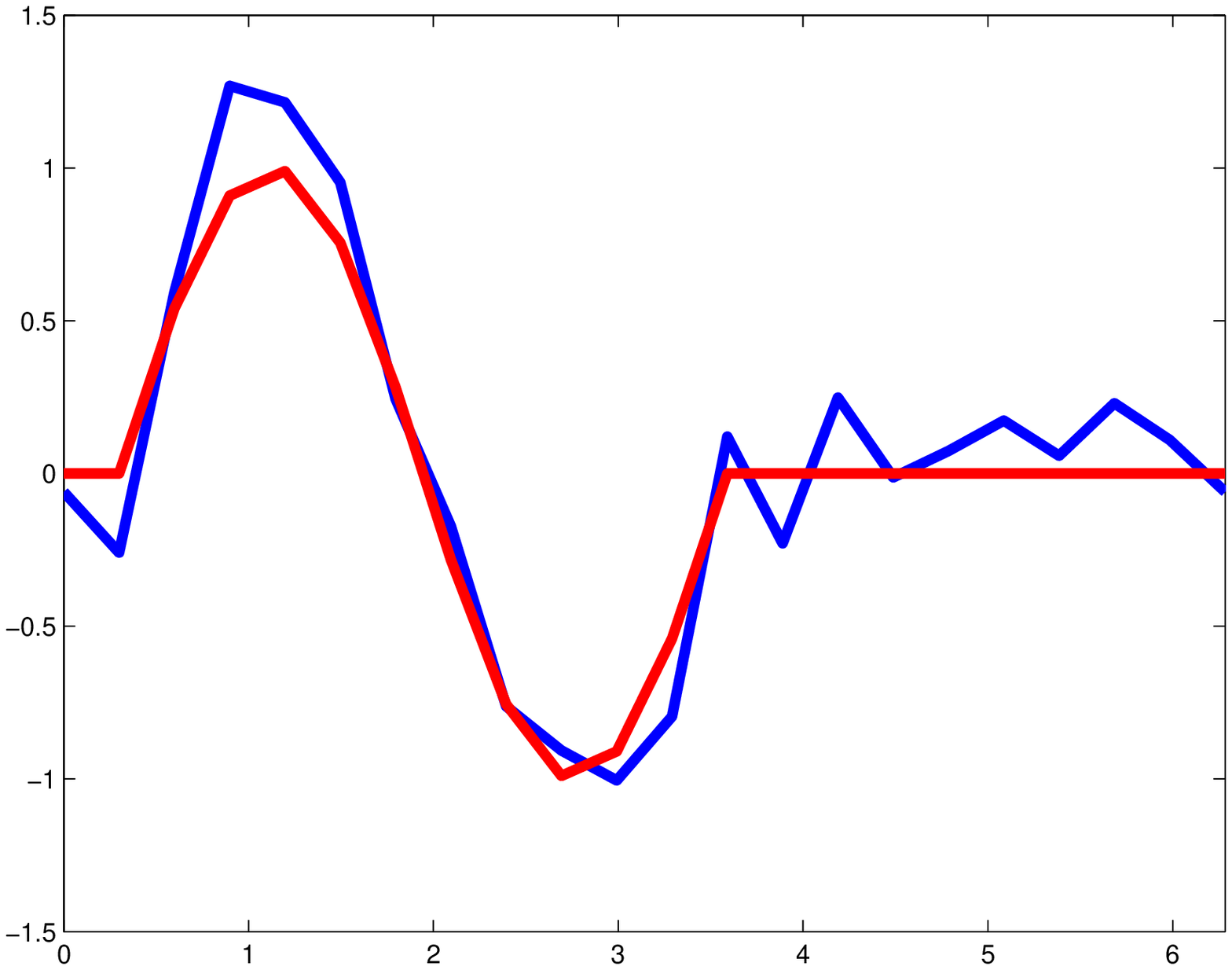}}
  \end{subfloat}	
  \caption{Noisy shifted copies of a function on SO(2)}\label{fig:sig1:so2:noise}
  \end{figure}

  \begin{figure}[h!]
    \centering
    \begin{subfloat}[Two shifted copies]	
    {\includegraphics[width=50mm,angle=0]{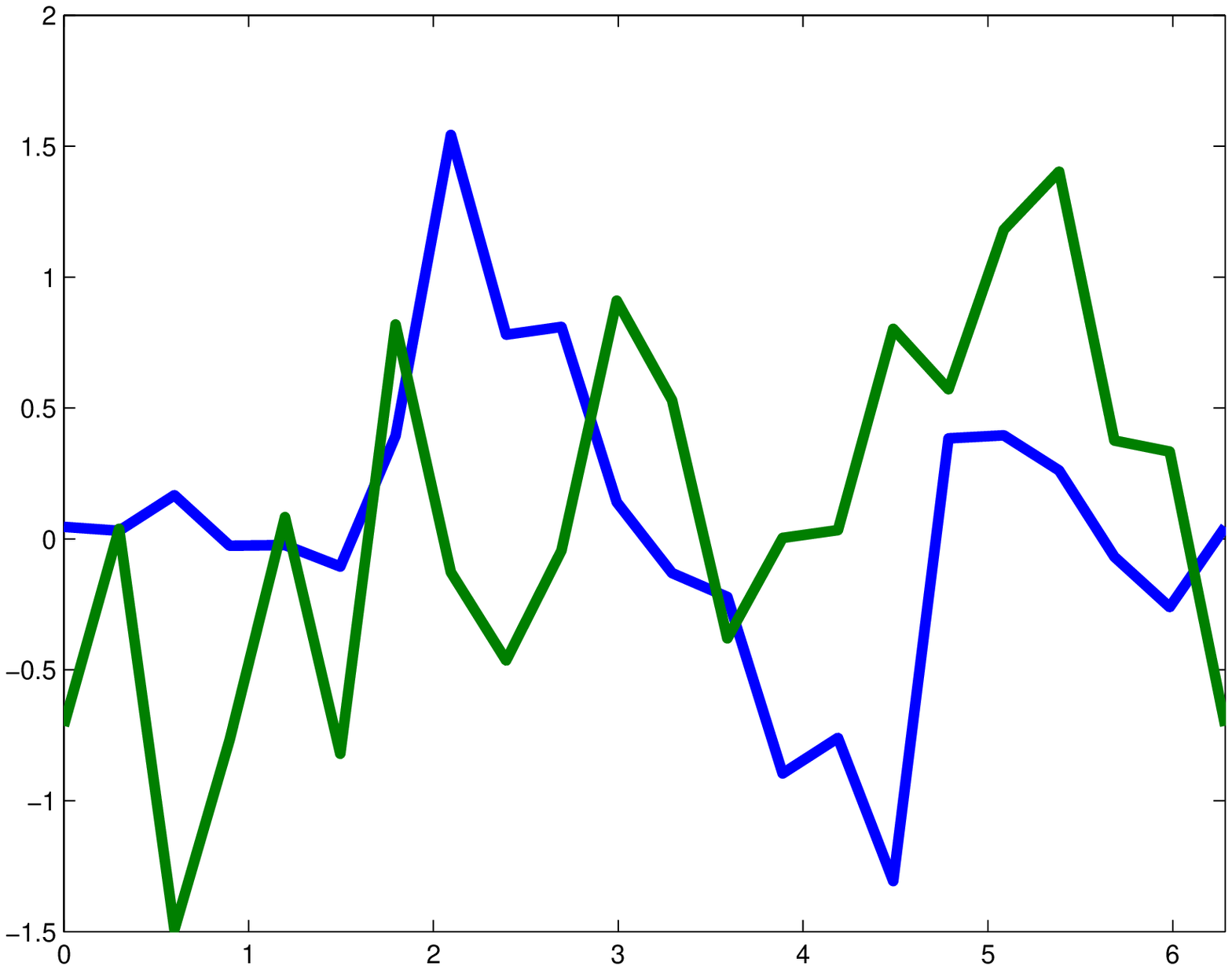}}
    \end{subfloat}	
    \begin{subfloat}[``Alignment penalty'' $f_{ij}(g)$]
    {\includegraphics[width=50mm,angle=0]{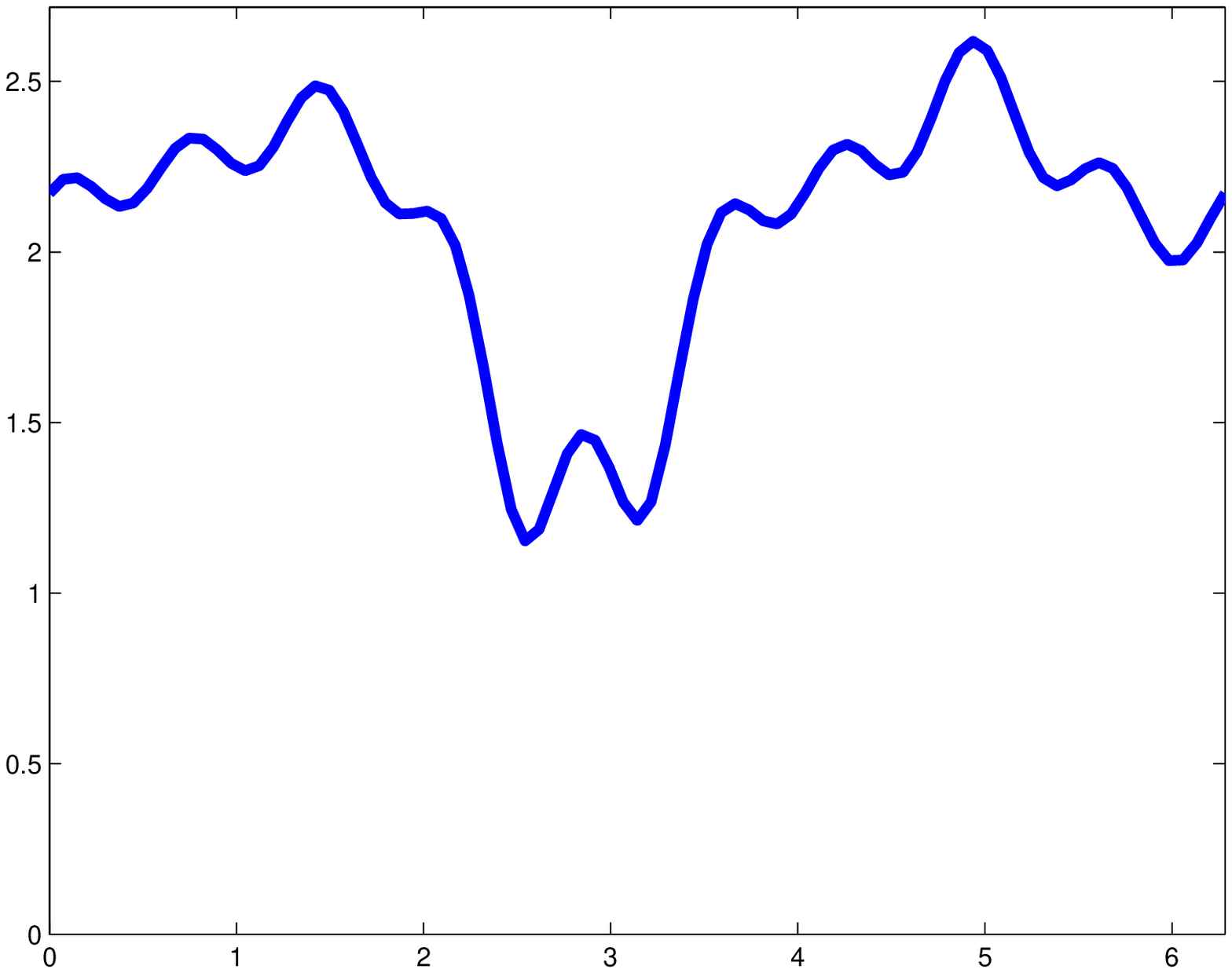}}
    \end{subfloat}	
    \caption{Penalty function for alignment of signals}\label{fig:fij}
  \end{figure}

\begin{figure}[h!]
  \begin{tabular}{ccc}
    \multirow{5}{*}{\includegraphics[width=40mm,angle=0]{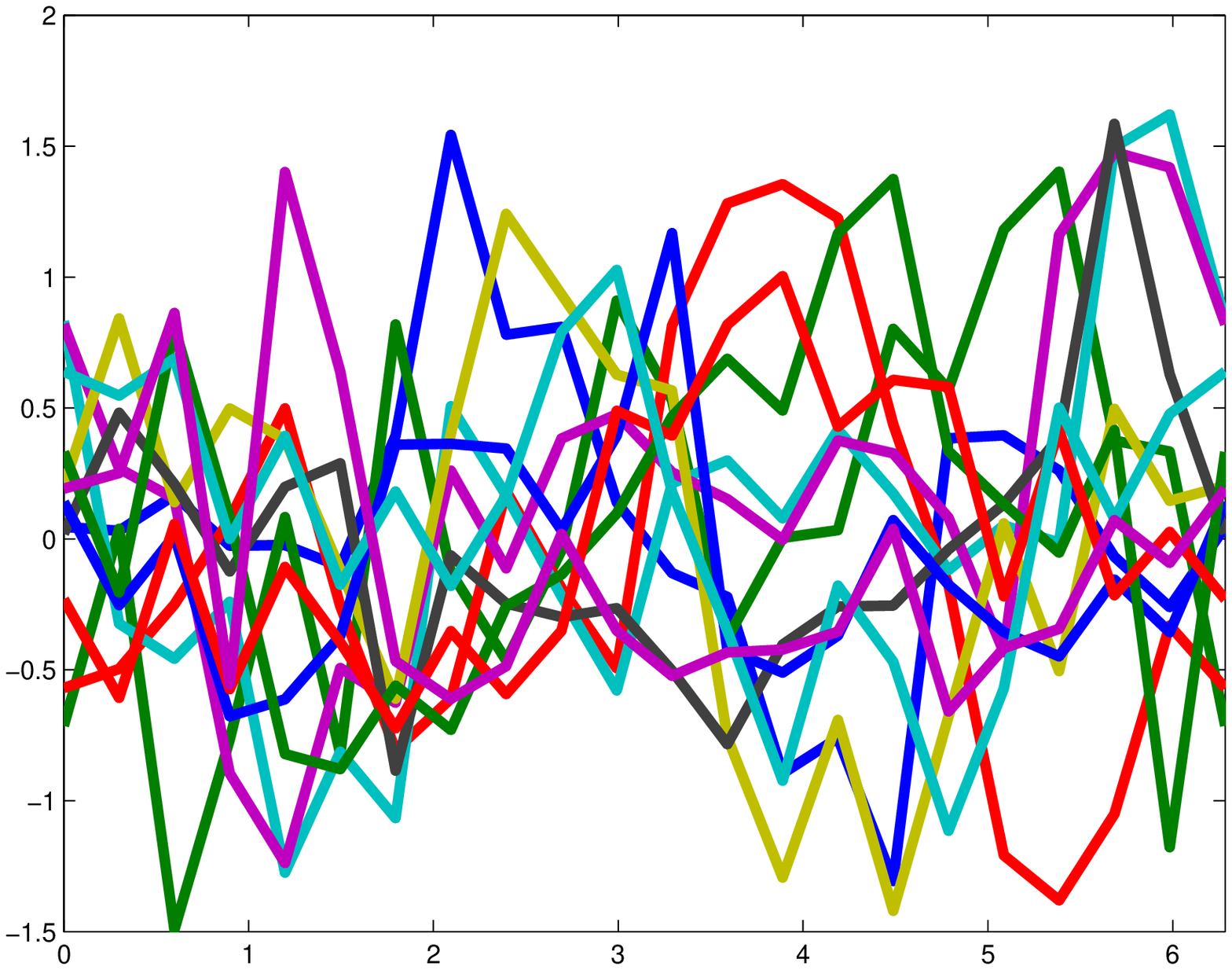} }   & 
    {\includegraphics[width=35mm,angle=0]{figures/misc_signals_v006_sig1_back_ns}} & {\includegraphics[width=35mm,angle=0]{figures/misc_signals_v006_sig1_back_avg_ns}} \\
       & (b) copies of class 1,  & (d) averaged class 1  \\
      &  aligned &  vs. original $\psi_1$  \\
      & {\includegraphics[width=35mm,angle=0]{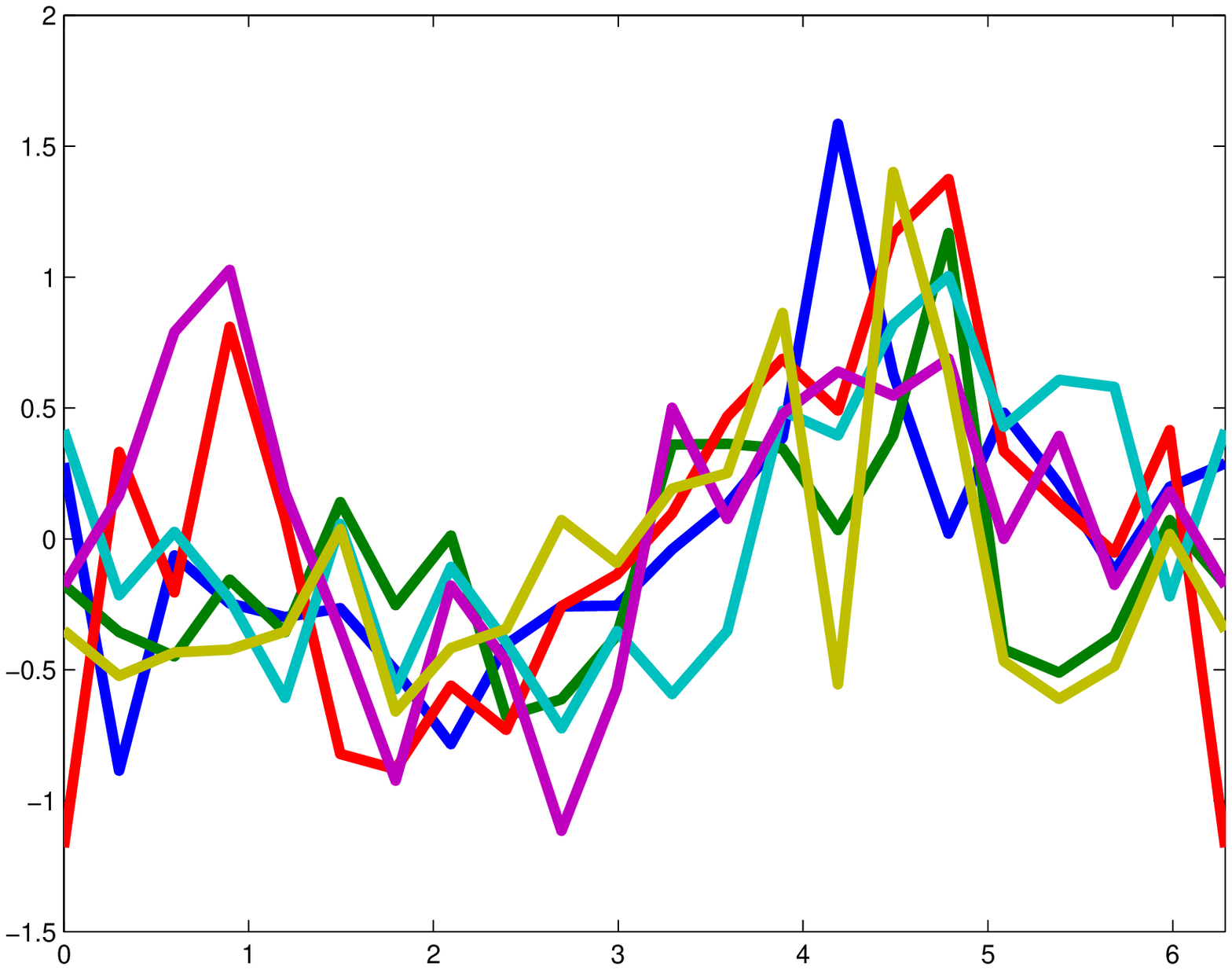}} & {\includegraphics[width=35mm,angle=0]{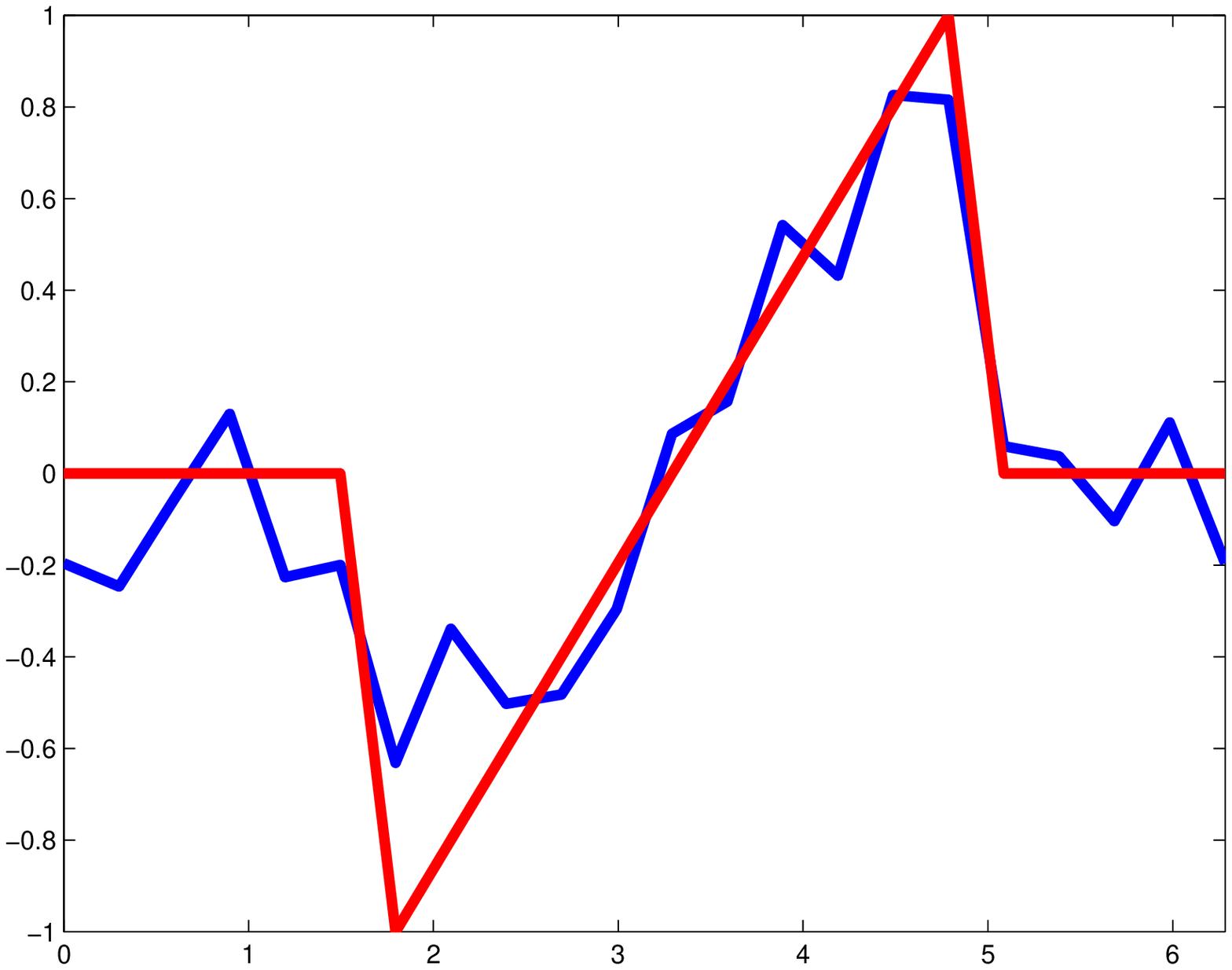}} \\
      (a) mixture of signals & (c) copies of class 2,  & (e) averaged class 2  \\
      &  aligned &  vs. original $\psi_2$  \\
  \end{tabular}
  \caption{Classification and alignment over SO(2)}\label{fig:het}
\end{figure}

\clearpage

\subsection{Problem Formulation}

We would like to find the optimal way to divide the samples into $M$ classes,
so that we can best align them within each class.
More formally, we would like to optimize the rotations and classification together:
\begin{equation}\label{eq:prob:aligncluster}
  \underset{\begin{subarray}{c} g_1,\ldots ,g_n \in \mathcal{G} \\
  a_1,\ldots ,a_n \in {0,..,M-1}  \end{subarray} }{\argmin} 
\sum_{m=0}^{M-1} \sum_{\begin{subarray}{c} i,j: \\ a_i=m \\ a_j=m \end{subarray}} f_{ij}(g_i  g_j^{-1}) .
\end{equation}

\begin{remark}
In this formulation, it is typically assumed  that the penalty $f_{ij}$ is non-negative, and typically larger when $i$ and $j$ do not belong to the same class, so that there is an incentive to distribute the samples among $M$ clusters, and align them within each cluster.
\end{remark}

We will also discuss the problem of controlling the distribution to different clusters; 
for example, we will discuss the case where all the clusters are required to be of equal size:
\begin{equation}
|\{i:a_i=m\}|=n/M .
\end{equation}

%
%
%
\subsection{Ambiguity and Averaging}\label{sec:avg}

In some cases, there is a degree of ambiguity in a solution of a NUG (in addition to the
inherent global ambiguity discussed in Remark \ref{rem:unique}).
Suppose that $g_1,g_2,\ldots,g_n$ is a solution of the NUG in (\ref{eq:NUG:Fourier:Block})
with the corresponding matrices $X^{(0)}, X^{(1)}, \ldots $,
and suppose that there exists another solution $\tilde{g}_1,\tilde{g}_2,\ldots,\tilde{g}_n$ 
with corresponding matrices $\tilde{X}^{(0)}, \tilde{X}^{(1)}, \ldots$
that achieves the same optimization objective. 
We would be particularly interested in the case where $\tilde{g}_1,\tilde{g}_2, \ldots, \tilde{g}_n$ cannot be obtained by applying some group element to $g_1,g_2,\ldots, g_n$ (the case discussed in 
Remark \ref{rem:unique}), so that in general ${X}^{(k)} \ne \tilde{X}^{(k)}$.
In the convex formulation of the problem in (\ref{eq:NUG:convex}), if both 
$X^{(0)}, X^{(1)}, \ldots$ and $\tilde{X}^{(0)}, \tilde{X}^{(1)}, \ldots $ are solutions, 
then so is every convex combination $\overline{X}^{(0)}, \overline{X}^{(1)}, \ldots$ of those solutions, even if there is no ``physical'' solution $\overline{g}_1,\overline{g}_2,\ldots , \overline{g}_n$ which corresponds to $\overline{X}^{(0)}, \overline{X}^{(1)}, \ldots$.
In some cases, where the form of the ambiguity is known, we can use this property to enforce a solution of a certain form. 
An example is provided in the next section.


%
%
%
\subsection{Reducing k-clustering to a NUG}\label{sec:cluster}

In this section we discuss the NUG formulation of the problem of clustering vertices in a graph in k communities,
to which we refer as k-clustering or k-classification. 
In particular, we  discuss the max-k-cut problem and the balanced version of the problem (where each cluster contains an equal number of vertices). 
The SDP relaxation of max-k-cut has been studied in \cite{goemans1995improved,frieze1995improved} and the closely related  min-k-cut problem has been studied as a NUG in \cite{bandeira2015non}. 
We present a slightly different formulation and derivation which we find useful for our discussion.
Since  ``$k$'' is often reserved for denoting indices of irreducible representations, we denote the number of clusters by $M$.

Given an undirected weighted graph $(V,E)$, the max-k-cut problem is to divide the vertices of a graph into $M$ clusters, cutting the most edges between clusters
  \begin{equation}
    \underset{a_1,\ldots ,a_n \in 0,\ldots ,M-1}{\argmax} \sum_{i,j=1}^n \left(1-\delta(a_i-a_j)\right)w_{ij} ,
  \end{equation}
with $w_{ij}$ the weight of the edge between vertices $i$ and $j$.
In other words, the problem is to divide the graph into $M$ clusters retaining the minimal sum of edge weights:
  \begin{equation}\label{eq:kmaxcut:def}
    \underset{a_1,\ldots,a_n \in 0,\ldots,M-1}{\argmin} \sum_{i,j=1}^n f_{ij}(a_i-a_j) ,
  \end{equation}
where $f_{ij}(a) = w_{ij} \delta(a)$.
We can view the weight of each edge as a measure of {\em incompatibility} or ``distance,'' and attempt to classify the vertices into clusters which are the least incompatible;
i.e.  the goal is to minimize the sum of intra-cluster weights retained, by finding a clustering that removes as many inter-cluster edges as possible.

The following SDP relaxation has been proposed in \cite{goemans1995improved,frieze1995improved},
\begin{equation}\label{eq:kcluster1}
\begin{array}{lll}
\underset{Y}{\text{min}} &  tr \left( W  Y \right)  & \\
{\text{Subject to}}  & Y \succeq 0 &  \\
                     & Y_{ii} = 1 & \forall i \\
                     & Y_{ij} \ge -\frac{1}{M-1}  & \forall i,j 
\end{array}
\end{equation}
where $W$ is the matrix of edge weights.
In a solution that corresponds to a ``physical'' solution (a valid classification, rather than, for example, a convex combination of classifications), $Y_{ij}=1$ if $i$ and $j$ are in the same cluster, and  $Y_{ij} = -\frac{1}{M-1}$ otherwise. 
A derivation for the related min-k-cut problem, in the context of NUG, is provided in \cite{bandeira2015non}. 
We discuss an additional derivation which we will generalize in the following sections. 

We consider the group $\mathbb{Z}_M$ of cyclic shifts. 
A function on this group can be written explicitly as a vector of length $M$, indexed $0,1,\ldots,M-1$.
We define the function $f_{ij}$ by the following formula   
\begin{equation}
  f_{ij} = (w_{ij},0,0,\ldots)^\intercal ,
\end{equation}
where $w_{ij}$ is the weight of the edge between $i$ and $j$.
We denote by $a_i$ the class assignment of the $i$ element, so that 
\begin{equation}
  f_{ij}(a_i-a_j) = \begin{cases} \begin{array}{lll} 0 &:& a_i \ne a_j \\
 w_{ij} &:& a_i=a_j \end{array} \end{cases},
\end{equation}
or, in group notation
\begin{equation}
  f_{ij}(a_i a_j^{-1}) = \begin{cases} \begin{array}{lll} 0 &:& a_i a_j^{-1} \ne \mathrm{e} \\ w_{ij} &:&  a_i a_j^{-1} = \mathrm{e} \end{array} \end{cases},
\end{equation}
where $\mathrm{e}$ is the identity element. This $f_{ij}$ is precisely the penalty function $f_{ij}$ in (\ref{eq:kmaxcut:def}).

The discrete Fourier transform (DFT) of $f_{ij}$ (with the appropriate choice of norm) is 
\begin{equation}\label{eq:cluster:NUG:costfunc}
  \hat{f}_{ij} = \frac{1}{M}(w_{ij},w_{ij},w_{ij},\ldots)^\intercal .
\end{equation}
These coefficients coincide with the coefficients of the expansion of $f_{ij}$ in the irreducible representation of $\mathbb{Z}_M$:
\begin{equation}\label{eq:cluster:NUG:costfunc:exp}
  {f}_{ij}(a) = \sum_{m=0}^{M-1} \hat{f}_{ij}(m) e^{\mathrm{i} 2 a \pi m/M}.
\end{equation}
Rewriting the clustering problem  (\ref{eq:kmaxcut:def}) as a NUG over $\mathbb{Z}_M$ yields  
\begin{equation}\label{eq:cluster:NUG:zm}
\underset{a_1,\ldots,a_n \in \mathbb{Z}_M }{\argmin} \sum_{i,j=1}^n f_{ij} \left( a_i  a_j^{-1} \right) ,
\end{equation}
and substituting (\ref{eq:cluster:NUG:costfunc}) and (\ref{eq:cluster:NUG:zm}) 
into the block matrix formulation in (\ref{eq:NUG:Fourier:Block}) yields
\begin{equation}\label{eq:cluster:NUG:Block}
\begin{array}{lll}
\underset{X^{(0)},\ldots,X^{(M-1)}}{\argmin} & \sum_{m=0}^{M-1} tr \left( \hat{F}^{(m)} X^{(m)} \right) & \\
\end{array}
\end{equation}
subject to $X^{(m)}$ having the structure in (\ref{eq:NUG:Fourier:Block:structure1}).
The scalar irreducible representations here are $\eta_m(a)=e^{\mathrm{i} 2 \pi a m/k }$,
so that for every $m=0,1,\ldots,M-1$, the matrix $X^{(m)}$ is an $n \times n$ matrix with $X^{(m)}_{ij}$ in position $i,j$. 
The matrix $\hat{F}^{(m)}$ is a matrix of the coefficients $\hat{f}_{ij}(m)$ in the DFT of $f_{ij}$;
by (\ref{eq:cluster:NUG:costfunc}), $\hat{f}^{(m)}_{ij} = w_{ij}/M$, for all $m$. 
For some solution of the NUG, we have for every pair $i,j$, with $a_i  a_j^{-1} = a_{ij} $
\begin{equation}\label{eq:cluster:NUG:Xij}
   X^{(m)}_{ij} = e^{\mathrm{i} 2 \pi a_{ij} m/M }, 
\end{equation}
where we again use $a_{ij}$ as the group elements and the angle.

After writing the problem in the block matrix form, we turn our attention to 
the convex version of this formulation (see (\ref{eq:NUG:convex})). 
In particular, we discuss the ambiguity in the solution, which results in convex combinations of equivalent solutions, as discussed in Section \ref{sec:avg}.
The penalty function $f_{ij}( a_i  a_j^{-1})$ depends only on whether or not $i$ and $j$ are in the same class, so it is invariant to permutations. In other words, for any permutation $\sigma$,
\begin{equation}
\sum_{i,j=1}^n f_{ij} \left( a_i  a_j^{-1} \right) = \sum_{i,j=1}^n f_{ij} \left( \sigma(a_i)  \left(\sigma(a_j)\right)^{-1}\right) .
\end{equation}

It follows that we can average all the different permutations, as discussed in Section \ref{sec:avg}.
If $i$ and $j$ are assigned to the same class in the solution, then $a_i=a_j$ so $a_{ij}=a_i a_j^{-1}=\mathrm{e}$ (or in integer notation $a_i-a_j = 0$) and by (\ref{eq:cluster:NUG:Xij})
\begin{equation}
  X^{(0)}_{ij} = X^{(1)}_{ij} = \ldots = X^{(M-1)}_{ij} = 1.
\end{equation} 
However, if $i$ and $j$ are not assigned to the same class in the solution, we can average all the solutions for all permutations $\sigma$
\begin{equation}
   X^{(m)}_{ij} = e^{\mathrm{i} 2 \pi (\sigma(a_i)(\sigma(a_j))^{-1} )m/M } = e^{\mathrm{i} 2 \pi (\sigma(a_i)-\sigma(a_j) )m/M } .
\end{equation}
A simple computation yields the averaged (equally weighted convex combination) solution for all $m>0$:
\begin{equation}
   \overline{X}^{(m)}_{ij} = \frac{1}{M-1} \sum_{a=1}^{M-1} e^{\mathrm{i} 2 \pi a m/M } = -\frac{1}{M-1}.
\end{equation}
In other words, $X^{(0)}$ is the all ones matrix, 
and the matrices for all $m>0$  are equal:
\begin{equation}\label{eq:cluster:averagedx:equal}
  X^{(1)} = X^{(2)} = \ldots  = X^{(M-1)},
\end{equation}
with the element $X^{(m)}_{ij}$ of these matrices with $m>0$:
\begin{equation}
  X^{(m)}_{ij} = \begin{cases} \begin{array}{lll} 1 &,& \text{ if $i$ and $j$ are in the same class,} \\ -\frac{1}{M-1} &,& \text{otherwise}.\end{array} \end{cases}
\end{equation}
Since $X^{(0)}_{ij} = 1$ is fixed, it can be ignored in the penalty term of (\ref{eq:cluster:NUG:Block}), so the optimization is reduced to 
\begin{equation}
\underset{X^{(1)},\ldots ,X^{(M-1)}}{\argmin}  \sum_{m=1}^{M-1} tr \left( \hat{F}^{(m)} X^{(m)} \right). 
\end{equation}
Using (\ref{eq:cluster:averagedx:equal}) and (\ref{eq:cluster:NUG:costfunc}), the optimization is further reduced to 
\begin{equation}
 \underset{X^{(1)}}{\argmin}  \left((M-1) tr \left( \hat{F}^{(1)} X^{(1)} \right)\right),
\end{equation}
which is scaled to 
\begin{equation}
\underset{X^{(1)}}{\argmin}  \left(tr \left( \hat{F}^{(1)} X^{(1)} \right) \right).
\end{equation}
Setting $X^{(1)}=Y$, we have the optimization term in (\ref{eq:kcluster1}), 
with the other conditions in (\ref{eq:kcluster1}) following from the derivation above.

\subsection{Controlling Cluster Size or Distributions}\label{sec:sizecontrol}

The purpose of this section is to extend the NUG framework by adding constraints on the distribution of solutions over the group.

In some cases it is useful to restrict the clusters in a graph cut problem to
be of equal size (for example, see discussion of min-k-cut in \cite{agarwal2015multisection}), i.e.
\begin{equation}
|\{i:a_i=m\}|=n/M .
\end{equation}
The NUG formulation does not have a mechanism to enforce such a constraint.
We first consider the extension of the NUG in (\ref{eq:kcluster1}) for the max-k-cut problem to the case of balanced cluster size. 
We add the constraint that for $m>0$, 
\begin{equation}\label{eq:equidistrib:sum0}
   \sum_{j} X^{(m)}_{ij} = 0   ~~~\forall i  
\end{equation}
(for $m=0$, the matrix $X^{(0)}$ is the trivial all ones matrix).
Indeed, for any valid balanced solution, every vertex $i$ has $n/M$ vertices (including itself) in the same cluster, and for these vertices $ X^{(m)}_{ij}=1$; 
every vertex also has $\frac{n}{M}(M-1)$ vertices in different classes, for these vertices $ X^{(m)}_{ij}=-\frac{1}{M-1}$. Therefore, the sum of these elements is $0$. 
This solution resembles  the algorithm proposed in \cite{agarwal2015multisection}.

This idea is a special case of a more general framework that enforces constant distribution over the group by enforcing (\ref{eq:equidistrib:sum0}).
The strict constraint on the distribution can be relaxed to an approximation, 
and therefore extended beyond discrete groups by relaxing the condition to one of the following constraints
\begin{equation}
   \| \sum_{j} X^{(m)}_{ij}(q) \|^2 \le w(m)   ~~~\forall i ,
\end{equation}
\begin{equation}
   \| \sum_{ij} X^{(m)}_{ij}(q) \|^2 \le w(m)  ,
\end{equation}
\begin{equation}
   \sum_i \| \sum_{j} X^{(m)}_{ij}(q) \|^2 \le w(m)  ,
\end{equation}
or by adding a similar constraint as a regularizer in the optimization (with the obvious extension where the irreducible representation $X^{(m)}_{ij}$ is a matrix).
This approach, which views the irreducible representations and their sum as an approximation of the Haar measure of the group (or appropriate variation when a prior is available), will be discussed in more detail in a future paper.

\subsection{The Direct Product of Alignment and Classification (Product NUG)}\label{sec:nugprod}

We revisit (\ref{eq:prob:aligncluster}) and rewrite the summation in the optimization:
\begin{equation}\label{eq:directprod:aligncluster:sum1}
\sum_{m=0}^{M-1} \sum_{\begin{subarray}{c} i,j: \\ a_i=m \\ a_j=m \end{subarray}} f_{ij}(g_i  g_j^{-1}) = 
\sum_{i,j=1}^n \delta(a_i, a_j) f_{ij}(g_i g_j^{-1}),
\end{equation}
where 
\begin{equation}\label{eq:delta1}
  \delta(a_i,a_j) = \begin{cases} \begin{array}{lll} 1 &:& a_i = a_j \\ 0 &:& \text{otherwise} \end{array} \end{cases}.
\end{equation}
With a small abuse of notation, we rewrite the class labels $a_1,\ldots ,a_n$ as elements in $\mathbb{Z}_M$; the expression $a_i = a_j$ can also be written as $a_i a_j^{-1} = e$ (where $e$ is the identity element of $\mathbb{Z}_M$), so, we can also write (\ref{eq:delta1}) as:
\begin{equation}\label{eq:delta2}
   \delta(a_i,a_j) = \delta(a_i a_j^{-1}) = \begin{cases} \begin{array}{lll} 1 &:& a_i  a_j^{-1} = e \\ 0 &:& \text{otherwise} . \end{array} \end{cases}
\end{equation}
We introduce the function $\tilde{f}_{ij}: \mathcal{G} \times \mathbb{Z}_M \rightarrow \mathbb{R}$, defined as
\begin{equation}
  \tilde{f}_{ij}\left( \left( g , a \right) \right) = 
 {f}_{ij}\left(  g \right) \delta(a) .
\end{equation}
Using the identity (\ref{eq:groupprod:inv}), we obtain
\begin{equation}
\tilde{f}_{ij}\left( (g_i,a_i)  (g_j,a_j)^{-1} \right) = \tilde{f}_{ij}\left( \left( g_i g_j^{-1} , a_i  a_j^{-1} \right) \right) ,
\end{equation}
and observe that 
$\tilde{f}_{ij}\left( (g_i,a_i) (g_j,a_i)^{-1} \right)$ is now simply a function over the compact group $\mathcal{G} \times \mathbb{Z}_M$. Therefore, the expression in (\ref{eq:prob:aligncluster}) is reduced to the NUG
\begin{equation}
  \underset{\begin{subarray}{c} (g_1,a_1),\ldots,(g_n,a_n) \in \mathcal{G} \times \mathbb{Z}_M  
\end{subarray} }{\argmin} 
\sum_{i,j} \tilde{f}_{ij}\left( (g_i,a_i) (g_j,a_j)^{-1} \right)  .
\end{equation}

The block matrix formulation (\ref{eq:NUG:Fourier:Block}) of this product NUG is
  \begin{equation}\label{eq:directprod:aligncluster:Fourier:Block}
  \underset{(g_1,a_1),\ldots ,(g_n,a_n) \in \mathcal{G} \times \mathbb{Z}_M }{\argmin} \sum_{k=0}^\infty\sum_{m=0}^{M-1} tr \left( \hat{F}^{(k,m)} X^{(k,m))} \right) 
  \end{equation}
  Where,  
  \begin{equation}\label{eq:directprod:aligncluster:Fourier:Block:structure1}
    \begin{array}{c}
    X^{(k,m)} =  \begin{bmatrix} \psi_{k,m} \left( (g_1,a_1) \right) \\ \vdots \\ \psi_{k,m} \left( (g_n,a_n) \right) \end{bmatrix} \begin{bmatrix} \psi_{k,m} \left( (g_1,a_1) \right) \\ \vdots \\ \psi_{k,m} \left( (g_n,a_n) \right) \end{bmatrix}^* , 
    \\
    \hat{F}^{(k,m)}= d_{km} \begin{bmatrix}  \hat{f}_{11}(k,m) & \cdots & \hat{f}_{n1}(k,m)   \\
    \vdots & \ddots & \vdots \\
    \hat{f}_{1n}(k,m) & \cdots & \hat{f}_{nn}(k,m)
    \end{bmatrix} ,
    \end{array}
  \end{equation}
with $\hat{f}_{ij}(k,m)$ the Fourier coefficient of $\tilde{f}_{ij}$ corresponding to the irreducible representation $\psi_{k,m}$,  and $d_{km}$ the dimensionality of that irreducible representation.
The irreducible representations $\psi_{k,m}$ of $\mathcal{G} \times \mathbb{Z}_M$ are enumerated in Table \ref{tb:directprod:aligncluster:ireps}; they are referenced by two indices, $k=0,1,\ldots$ and 
$m=0,1,\ldots,M-1$.

As in the general discussion of NUG, we are interested in the convex relaxation of (\ref{eq:directprod:aligncluster:Fourier:Block}):
\begin{equation}\label{eq:directprod:aligncluster:convex}
\underset{ \{X^{(k,m)}\}_{k,m} }{\argmin}  \sum_{m=0}^{M-1} \sum_{k=0}^\infty tr \left( \hat{F}^{(k,m)} X^{(k,m)} \right) 
\end{equation}
where the solution matrices $X^{(k,m)}$ are in the convex hull of 
the matrices defined in (\ref{eq:directprod:aligncluster:Fourier:Block:structure1}).

The relaxation of the form (\ref{eq:NUG:convexrelax}) is 
\begin{equation}
\begin{array}{lll}
\underset{X^{(k,m)}}{\text{maximize}} & \sum_{m=0}^{M-1} \sum_{k=0}^{\infty} tr \left( \hat{F}^{(k,m)} X^{(k,m)} \right) & \\
{\text{subject to}}  & X^{(k,m)} \succeq 0 & \forall k,m \\
                     & X_{ii}^{(k,m)} = 1 & \forall k,m,i \\
                     & \sum_{k,m} tr \left(  \psi_{k,m}^*\left((g,a)\right) X^{(k,m)}_{ij} \ge 0 \right)  & \forall i,j ~,~ \forall (g,a) \in \mathcal{G} \times \mathbb{Z}_M \\
                     & X_{ij}^{(0,0)} = 1  & \forall  i,j \\
                     &  X^{(k,m)}_{ij} \ge -\frac{1}{M-1} & \forall m>0, ~~\forall i,j. \\
\end{array}
\end{equation}

In the following sections, we turn our attention to the ambiguities and symmetries
in $X^{(k,m)}$ of the convexified formulation (\ref{eq:directprod:aligncluster:convex}).

\subsection{The $0$ Order Representation of Alignment, and the Clustering Label Ambiguity}\label{sec:classamb}

As discussed in Section \ref{sec:avg}, when there is ambiguity in the solution
of the NUG, it is manifested as convex combinations of solutions in the 
covexified formulation (\ref{eq:directprod:aligncluster:convex}).
As discussed in Section \ref{sec:cluster}, there is ambiguity in the assignment of class labels which leads to symmetries in the NUG for the clustering problem.

We observe that the irreducible representations $\psi_{0,m}$ of $\mathcal{G} \times \mathbb{Z}_M$, enumerated in the first row in Table \ref{tb:directprod:aligncluster:ireps},
are simply the irreducible representations of $\mathbb{Z}_M$ which appear in the 
max-k-cut problem, as are the coefficients of the expansion of $f_{ij}$.  
Therefore, the same argument used in Section \ref{sec:cluster} can be used here to identify the desired form of the first row in the solution of the convex simultaneous alignment and classification problem (\ref{eq:directprod:aligncluster:convex}).
In fact, the same argument applies to all rows, which can be averaged in the same way;
the form of the averaged solution of each block $X^{(k,m)}_{ij}$ is summarized in Table \ref{tb:clusteralign:avgclust}, for the two cases: either $i$ and $j$ are in the same class (a),
or they are in different classes (b).

\begin{table}[h]
{\tiny
    \centering 
    \begin{minipage}[t]{0.5\linewidth}
\vspace{0pt}
\subfloat[$a_i = a_j$][$a_i = a_j$]{ 
\begin{tabular}{ l || c | c | c | c}
 $X^{(k,m)}$             & $m=0$      & $m=1$      & $\cdots$    & $m=M-1$      \\	
    \hline
    \hline
     $k=0$ & $1$ & {\color{red}$1$}       &  {\color{red}$\cdots$}    &  {\color{red}$1$}     \\
    \hline
      $k=1$ & $X^{(1,0)}_{ij} $ & {\color{red}$X^{(1,0)}_{ij} $}  & {\color{red}$\cdots$}      & {\color{red}$X^{(1,0)}_{i,j}$}     \\
    \hline
     $k=2$ & {$X^{(2,0)}_{i,j}$} & {\color{red}$X^{(2,0)}_{i,j}$}  & {\color{red}$\cdots$}      & {\color{red}$X^{(2,0)}_{i,j}$}    \\
    \hline 
     \vdots & \vdots & $\ddots$ & \vdots \\
\vspace{0.1cm}
  \end{tabular}
    }
  \end{minipage}
\begin{minipage}[t]{0.5\linewidth}
\vspace{0pt}
  \subfloat[$a_i \ne a_j$][$a_i \ne a_j$]{
\begin{tabular}{ l || c | c | c | c}			
 $X^{(k,m)}$             & $m=0$      & $m=1$      & $\cdots$    & $m=M-1$      \\	
    \hline
    \hline
     $k=0$ & $1$ & {\color{red}$-\frac{1}{M-1}$}       &  {\color{red}$\cdots$}    &   {\color{red}$-\frac{1}{M-1}$}     \\
    \hline
     $k=1$ & $X^{(1,0)}_{ij} $ & {\color{red}$-\frac{X^{(1,0)}_{ij}}{M-1}$}  & {\color{red}$\cdots$}      & {\color{red}$-\frac{X^{(1,0)}_{ij}}{M-1}$}     \\
    \hline
     $k=2$ & {$X^{(2,0)}_{i,j}$} & {\color{red}$-\frac{X^{(2,0)}_{i,j}}{M-1}$}  & {\color{red}$\cdots$}      & {\color{red}$-\frac{X^{(2,0)}_{i,j}}{M-1}$}     \\
    \hline 
     \vdots & \vdots & $\ddots$ & \vdots \\
  \end{tabular}
 }
  \end{minipage} 
}
  \caption{The desired form of blocks $X^{(k,m)}_{ij}$ of  $X^{(k,m)}$, corresponding to (a) same, and (b) distinct classes}\label{tb:clusteralign:avgclust}
  \end{table}

\subsection{Inter-Class Invariance}\label{sec:interclass}

In addition to the class label ambiguity, there is another type of ambiguity which emerges in the simultaneous clustering and alignment product NUG. 
We observe that the solution is invariant to a $\mathcal{G}$ group action on one class (without applying the same action to the other classes, so this is not a group action of $\mathcal{G} \times \mathcal{A}$).
\begin{lemma}\label{lem:interclass}
Let  $a_1,\ldots,a_n \in \mathbb{Z}_M $, $ g_1,\ldots,g_n \in \mathcal{G}$ and  $ \tilde{g}_1,\ldots,\tilde{g}_n \in \mathcal{G}$. Suppose that  $a \in \mathbb{Z}_M$ and $g \in \mathcal{G}$ are some arbitrary class and rotation, and 
suppose that 
\begin{equation}
  \tilde{g_i}=\begin{cases} \begin{array}{lll} g_ig &:& a_i = a \\ g_i &:& \text{otherwise}. \end{array} \end{cases}
\end{equation}
Then, the objective value in (\ref{eq:prob:aligncluster}) is the same for $ g_1,\ldots,g_n \in \mathcal{G}$ and $ \tilde{g}_1,\ldots,\tilde{g}_n \in \mathcal{G}$:
\begin{equation} 
\sum_{m=0}^{M-1} \sum_{\begin{subarray}{c} i,j: \\ a_i=m \\ a_j=m \end{subarray}} f_{ij}(g_i g_j^{-1}) = \sum_{m=0}^{M-1} \sum_{\begin{subarray}{c} i,j: \\ a_i=m \\ a_j=m \end{subarray}} f_{ij}(\tilde{g}_i  \tilde{g}_j^{-1}).
\end{equation}
In other words, if  $a_1,\ldots,a_n$, $ g_1,\ldots,g_n$ is a solution of (\ref{eq:prob:aligncluster}), 
then so is $a_1,\ldots,a_n$, $ \tilde{g}_1,\ldots,\tilde{g}_n$.
\end{lemma}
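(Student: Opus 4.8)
The plan is to show that the modification $g_i \mapsto \tilde g_i$ leaves every single summand $f_{ij}(g_i g_j^{-1})$ appearing in the objective unchanged, which is clearly enough since the objective is a sum of such terms. The key observation is that the objective in (\ref{eq:prob:aligncluster}) only pairs indices $i,j$ that lie in the \emph{same} class, i.e. $a_i = a_j$. So it suffices to fix an arbitrary class $m$ and argue on the block of indices $\{i : a_i = m\}$ separately. There are exactly two cases for such a block: either $m = a$ (the distinguished class that gets right-multiplied by $g$), or $m \neq a$.

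In the case $m \neq a$, for every $i$ with $a_i = m$ we have $a_i \neq a$, so by definition $\tilde g_i = g_i$; hence $\tilde g_i \tilde g_j^{-1} = g_i g_j^{-1}$ for all $i,j$ in this block, and the summand is literally identical. In the case $m = a$, for every $i$ with $a_i = m = a$ we have $\tilde g_i = g_i g$; then for any pair $i,j$ in this block,
\begin{equation}
\tilde g_i \tilde g_j^{-1} = (g_i g)(g_j g)^{-1} = g_i g g^{-1} g_j^{-1} = g_i g_j^{-1},
\end{equation}
using only associativity and $g g^{-1} = \mathrm e$ in $\mathcal{G}$. So again $f_{ij}(\tilde g_i \tilde g_j^{-1}) = f_{ij}(g_i g_j^{-1})$. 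Summing over all blocks $m = 0, \ldots, M-1$ and over the pairs within each block yields the claimed equality of the two objective values; the final ``in other words'' sentence is then immediate, since an equal objective value means $\tilde g$ is optimal whenever $g$ is.

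I do not expect any genuine obstacle here: this is essentially the same cancellation as in Remark \ref{rem:unique}, applied class-by-class rather than globally, and the only thing to be careful about is the bookkeeping that the cross terms between different classes never appear in the objective (which is exactly why a per-class group action, rather than a global one, still preserves the value). One minor point worth stating explicitly in the write-up is that the class labels $a_1,\ldots,a_n$ are not touched by the modification, so the index set $\{(i,j) : a_i = a_j = m\}$ over which we sum is the same before and after; this is what lets us match the two sums term by term.
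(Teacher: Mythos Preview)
Your proposal is correct and follows essentially the same approach as the paper: both proofs split on whether $m=a$ or $m\neq a$, observe that in the latter case $\tilde g_i=g_i$ trivially, and in the former case use the cancellation $(g_ig)(g_jg)^{-1}=g_ig_j^{-1}$. Your write-up is somewhat more explicit about why only same-class pairs appear and why the index sets match, but the argument is identical.
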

\begin{proof}
  For any $m \ne a$, $\tilde{g}_i=g_i$, so that 
\begin{equation}\label{eq:pf:interclassrot100}
\sum_{\begin{subarray}{c} i,j: \\ a_i=m \\ a_j=m \end{subarray}} f_{ij}(\tilde{g}_i \tilde{g}_j^{-1}) = \sum_{\begin{subarray}{c} i,j: \\ a_i=m \\ a_j=m \end{subarray}} f_{ij}({g}_i  {g}_j^{-1}) .
\end{equation}
  For $m=a$, we have 
\begin{equation}
f_{ij}(\tilde{g}_i  \tilde{g}_j^{-1}) = f_{ij}( ({g}_ig)  ({g}_jg)^{-1}) = f_{ij}({g}_i g g^{-1} {g}_j^{-1}) = f_{ij}({g}_i {g}_j^{-1})
\end{equation} 
so that (\ref{eq:pf:interclassrot100}) holds for $m=a$ as well.
\end{proof}

It follows that when $a_i \ne a_j$, we may average over all the possible inter-class alignment. 
By (\ref{eq:haar:avg}), using the Haar measure for the possible alignments yields 
$0$ for all elements with $k \ne 0$. 
The form of the averaged solution of each block $X^{(k,m)}_{ij}$ is summarized in Table \ref{tb:clusteralign:avginterclust}, for the two cases: either $i$ and $j$ are in the same cluster,
or they are in different clusters.
\begin{table}[h]
{\tiny
    \centering 
    \begin{minipage}[t]{0.5\linewidth}
\vspace{0pt}
  \subfloat[$a_i = a_j$][$a_i = a_j$]{ 
  \begin{tabular}{ l || c | c | c | c}
 $X^{(k,m)}$             & $m=0$      & $m=1$      & $\cdots$    & $m=M-1$      \\	
    \hline
    \hline
     $k=0$ & $1$ & {\color{red}$1$}       &  {\color{red}$\cdots$}    &  {\color{red}$1$}     \\
    \hline
      $k=1$ & $X^{(1,0)}_{ij} $ & {\color{red}$X^{(1,0)}_{ij} $}  & {\color{red}$\cdots$}      & {\color{red}$X^{(1,0)}_{i,j}$}     \\
    \hline
     $k=2$ & {$X^{(2,0)}_{i,j}$} & {\color{red}$X^{(2,0)}_{i,j}$}  & {\color{red}$\cdots$}      & {\color{red}$X^{(2,0)}_{i,j}$}    \\
    \hline 
     \vdots & \vdots & $\ddots$ & \vdots \\
  \end{tabular}
  }
\end{minipage}
\begin{minipage}[t]{0.5\linewidth}
\vspace{0pt}
  \subfloat[$a_i \ne a_j$][$a_i \ne a_j$]{
  \begin{tabular}{ l || c | c | c | c}			
 $X^{(k,m)}$             & $m=0$      & $m=1$      & $\cdots$    & $m=M-1$      \\	
    \hline
    \hline
     $k=0$ & $1$ & {\color{red}$-\frac{1}{M-1}$}       &  {\color{red}$\cdots$}    &   {\color{red}$-\frac{1}{M-1}$}     \\
    \hline
     $k=1$ & {\color{red}$0$} & {\color{red}$0$}  & {\color{red}$\cdots$}      & {\color{red}$0$}     \\
    \hline
     $k=2$ & {\color{red}$0$} & {\color{red}$0$}  & {\color{red}$\cdots$}      & {\color{red}$0$}     \\
    \hline 
     \vdots & \vdots & $\ddots$ & \vdots \\
\vspace{0.01cm}
  \end{tabular}}
\end{minipage}
}
  \caption{The desired form of blocks $X^{(k,m)}_{ij}$ of  $X^{(k,m)}$, after averaging inter-class rotations (Lemma \ref{lem:interclass}) }\label{tb:clusteralign:avginterclust}
  \end{table}

%
\section{Algorithms}\label{sec:alg}

Substituting the results of Section \ref{sec:nugprod} into (\ref{eq:NUG:convexrelax})
we obtain the following SDP:
  \begin{equation}\label{eq:alg:convexrelax}
    \begin{array}{lll}
      \underset{ \{X^{(k,m)}\}_{k,m} }{\argmin} & \sum_{k=0}^\infty\sum_{m=0}^{M-1} tr \left( \hat{F}^{(k,m)} X^{(k,m)} \right) & \\
      {\text{subject to}}  & X^{(k,m)} \succeq 0 & \forall k,m \\
                     & X_{ii}^{(k,m)} = I_{d_k \times d_k} & \forall k,m,i \\
                     & \sum_{k=0}^\infty \sum_{m=0}^{M-1} d_k tr \left( \psi^*_{k,m}((g,a)) X^{(k)}_{ij}  \right) \ge 0 & \forall  i,j  \\
      & & \forall g,a\in\mathcal{G} \times \mathbb{Z}_M \\
                     & X_{ij}^{(0,0)} = 1  & \forall  i,j  \\
  \end{array}
  \end{equation}

  The coefficient in the matrix $\hat{F}^{(k,m)}$ can be obtained from the original alignment problem, when no clustering is required; suppose that the coefficients in that problem are $\hat{F}^{(k)}$, then for all $k$ and $m$, the coefficients $\hat{F}^{(k,m)}$ are
\begin{equation}
  \hat{F}^{(k,m)} = \frac{1}{M}\hat{F}^{(k)} .
\end{equation}

We observe that due to  the structure discussed in Section \ref{sec:classamb} and Section \ref{sec:interclass}, regardless of whether $a_i = a_j$ or $a_i \ne a_j$, 
\begin{equation}\label{eq:alg:mateq}
\begin{array}{ll}
{ X^{(0,m)}=X^{(0,1)}} & \forall m \ne 0 \\
{ X^{(k,m)}=X^{(k,0)}} & \forall k \ne 0 ~,~ \forall m .
\end{array}
\end{equation}
Taking these observations into account,  (\ref{eq:alg:convexrelax}) is reduced to
\begin{equation}\label{eq:alg:convexrelax:reduce1}
\begin{array}{lll}
\underset{X^{(k,m)}}{\argmin} & \sum_{m=0}^{M-1} \sum_{k=0}^{\infty} tr \left( \hat{F}^{(k,m)} X^{(k,m)} \right) & \\
{\text{subject to}}  &{ X^{(0,m)}=X^{(0,1)}} & \forall m \ne 0 \\
&{ X^{(k,m)}=X^{(k,0)}} & \forall k \ne 0 ~~,~~ \forall m \\
&{ X^{(0,m)}_{ij} \ge -\frac{1}{M-1}  }& \forall m>0 ~~,~~ \forall i,j \\ 
&X^{(k,m)} \succeq 0 & \forall k,m \\
&X_{ii}^{(k,m)} = 1 & \forall k,i \\
&\sum_{k,m} tr \left(  \psi_{k,m}^*\left((g,a)\right) X^{(k,m)}_{ij}  \right) \ge 0 & \forall i,j ~,~ \forall (g,a) \in \mathcal{G} \times \mathbb{Z}_M  \\
& X_{ij}^{(0,0)} = 1  & \forall  i,j .\\
\end{array}
\end{equation}

In fact, the requirement for non-negativity over $\mathcal{G} \times \mathbb{Z}_M$
is redundant, due to the following lemma.
\begin{lemma}
Suppose that $a \ne e$ (where $e$ is the identity element of $\mathbb{Z}_M$). 
If the other constraints in (\ref{eq:alg:convexrelax:reduce1}) are satisfied, 
then for all $i,j$,
\begin{equation}
\sum_{k,m} tr \left(  \rho_{k,m}^*\left((g,a)\right) X^{(k,m)}_{ij}  \right) \ge 0 
\end{equation}
for all $g \in \mathcal{G}$ and all $a \ne e$.
\end{lemma}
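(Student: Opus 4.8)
The plan is to evaluate the left-hand side in closed form using the product structure $\psi_{k,m}((g,a)) = \rho_k(g)\otimes\eta_m(a)$ together with the block identities (\ref{eq:alg:mateq}). Since $\eta_m$ is one-dimensional, $\psi_{k,m}^*((g,a)) = \overline{\eta_m(a)}\,\rho_k^*(g) = e^{-\mathrm{i}2\pi a m/M}\rho_k^*(g)$, so
\begin{equation}
\sum_{k,m} d_k\, tr\left(\psi_{k,m}^*((g,a)) X^{(k,m)}_{ij}\right) = \sum_{k=0}^\infty d_k \sum_{m=0}^{M-1} e^{-\mathrm{i}2\pi a m/M}\, tr\left(\rho_k^*(g) X^{(k,m)}_{ij}\right).
\end{equation}
First I would isolate the $k=0$ summand, which equals $\sum_{m=0}^{M-1} e^{-\mathrm{i}2\pi a m/M} X^{(0,m)}_{ij}$ because $d_0=1$ and $\rho_0\equiv 1$. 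For each $k\ge 1$, the identity $X^{(k,m)}_{ij}=X^{(k,0)}_{ij}$ from (\ref{eq:alg:mateq}) lets me factor $tr(\rho_k^*(g) X^{(k,0)}_{ij})$ out of the $m$-sum.

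The crux is the elementary fact that $\sum_{m=0}^{M-1} e^{-\mathrm{i}2\pi a m/M}=0$ whenever $a\neq e$, i.e. $a\not\equiv 0\pmod M$. Consequently every $k\ge 1$ term vanishes, and in the $k=0$ term I substitute $X^{(0,0)}_{ij}=1$ and $X^{(0,m)}_{ij}=X^{(0,1)}_{ij}$ for $m>0$ (again by (\ref{eq:alg:mateq})), so that it collapses to $1 + X^{(0,1)}_{ij}\sum_{m=1}^{M-1} e^{-\mathrm{i}2\pi a m/M} = 1 - X^{(0,1)}_{ij}$. Hence the left-hand side equals $1-X^{(0,1)}_{ij}$ for every $g\in\mathcal{G}$ and every $a\neq e$ --- notably, it does not depend on $g$ at all.

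It then remains to verify $1-X^{(0,1)}_{ij}\ge 0$. From $X^{(0,1)}\succeq 0$ with $X^{(0,1)}_{ii}=X^{(0,1)}_{jj}=1$, the $2\times2$ principal minor on indices $i,j$ yields $|X^{(0,1)}_{ij}|\le 1$; since the constraint $X^{(0,m)}_{ij}\ge -\tfrac{1}{M-1}$ for $m>0$ in (\ref{eq:alg:convexrelax:reduce1}) forces $X^{(0,1)}_{ij}$ to be real, we conclude $X^{(0,1)}_{ij}\le 1$ and the lemma follows. I expect no genuine obstacle here; the only points requiring care are the (harmless) interchange of the $k$- and $m$-summations --- trivial in the truncated implementation --- and confirming that reality of $X^{(0,1)}_{ij}$ is indeed pinned down by the listed constraints rather than only by the intended interpretation of the blocks.
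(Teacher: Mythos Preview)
Your proof is correct and follows essentially the same approach as the paper: both split the sum at $k=0$, use the vanishing character sum $\sum_{m=0}^{M-1}\eta_m(a)=0$ for $a\neq e$ together with the equalities $X^{(k,m)}=X^{(k,0)}$ to kill all $k\ge 1$ contributions, and then handle the $k=0$ row using $X^{(0,0)}_{ij}=1$ and the bounds on $X^{(0,m)}_{ij}$. Your $k=0$ computation is in fact a little sharper than the paper's---by also invoking $X^{(0,m)}=X^{(0,1)}$ you obtain the exact value $1-X^{(0,1)}_{ij}$ rather than just a lower bound---but the underlying argument is the same.
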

\begin{proof}
  Due to the other constraints in (\ref{eq:alg:convexrelax:reduce1}),
  for all $k > 0$, we have $X^{(k,0)}=X^{(k,1)}=\ldots=X^{(k,m)}$, so that for all $k > 0$
\begin{equation}
\sum_{m=0}^{M-1} tr \left(  \psi_{k,m}^*\left((g,a)\right) X^{(k,m)}_{ij}  \right) =
 tr \left(  \rho_{k}^*\left(g\right) X^{(k,0)}_{ij}  \right) \sum_{m=0}^{M-1} \eta_m(a) = 0
\end{equation}
where the last step is due to the fact that $\sum_{m=0}^{M-1} \eta_m(a) =0$ for $a$ that are not the identity.

For $k=0$, we have $X_{ij}^{(0,0)} = 1$ and $ -\frac{1}{M-1} \le X^{(0,m)} \le 1 $, so that
\begin{equation}
\sum_{m=0}^{M-1} tr \left(  \psi_{0,m}^*\left((g,a)\right) X^{(0,m)}_{ij}  \right) =
1 + \sum_{m=1}^{M-1}  \eta_m\left(a\right) X^{(0,m)}_{ij}  \ge 1 - (M-1)/(M-1) =0
\end{equation}

\end{proof}

Using this lemma,  (\ref{eq:alg:convexrelax:reduce1}) is reduced to 
\begin{equation}\label{eq:alg:convexrelax:reduce2}
\begin{array}{lll}
\underset{X^{(k,m)}}{\argmin} & \sum_{m=0}^{M-1} \sum_{k=0}^{\infty} tr \left( \hat{F}^{(k,m)} X^{(k,m)} \right) & \\
{\text{subject to}}  &{ X^{(0,m)}=X^{(0,1)}} & \forall m \ne 0 \\
&{ X^{(k,m)}=X^{(k,0)}} & \forall k \ne 0 \\
&{  X^{(0,m)}_{ij} \ge -\frac{1}{M-1} }& \forall m>0 ~~,~~ \forall i,j \\
&X^{(k,m)} \succeq 0 & \forall k,m \\
&X_{ii}^{(k,m)} = 1 & \forall k,i \\
&\sum_{k,m} tr \left(  \rho_{k,m}^*\left((g,e)\right) X^{(k,m)}_{ij}  \right) \ge 0 & \forall i,j ~,~ \forall g \in \mathcal{G}  \\
& X_{ij}^{(0,0)} = 1  & \forall  i,j \\
\end{array}
\end{equation}
where $e$ is the identity element of $\mathbb{Z}_M$.

\subsection{Controlling Class Size}

When the size of the classes is known to be equal, the constraint (\ref{eq:equidistrib:sum0}) of Section \ref{sec:sizecontrol} is added to the SDP. Considering all the symmetries, the constraint takes the form 
\begin{equation}\label{eq:alg:size}
   \sum_{j} X^{(0,1)}_{ij} = 0   ~~~\forall i  .
\end{equation}


%
%
%
\subsection{Variable and Constraints Accounting }

The purpose of this section is to discuss the number of free variables remaining in the formulation (\ref{eq:alg:convexrelax:reduce2}), and the number of constraints. 
We note that the only remaining matrix variables are $X^{(0,1)}$ and $X^{(1,0)},X^{(2,0)},X^{(3,0)},\ldots$. The matrix $X^{(0,0)}$ is the trivial all ones matrix, and every other matrix is set to be equal to the appropriate matrix of those listed above (see (\ref{eq:alg:mateq})).
We observe that the matrix $X^{(0,1)}$ has exactly the same form as the matrix $Y$ in the max-cut classification SDP, and the constrains on it are similar. 
The matrices $X^{(1,0)},X^{(2,0)},X^{(3,0)}..$ have the same form as the matrices 
$X^{(1)},X^{(2)},X^{(3)},\ldots$ in the alignment problem, and also have similar constraints.
In other words, loosely speaking the number of variables and constraints in the product NUG discussed here is similar to the sum of those in the separate classification problem and those in the alignment problem, which is much smaller than the number of variables and constraints of the formulation (\ref{eq:alg:convexrelax}) which we obtained before considering the symmetries.

%
\section{Experimental Results}\label{sec:res}

In this section we present experiments with the simplified case of alignment and clustering of noisy functions on SO(2) (also discussed in Section \ref{sec:ex}). 
We generated $4$ complex valued prototype functions over SO(2), the functions are low-bandwidth, represented by $11$ coefficients in the Fourier domain. 
For each prototype function we generated $15$ copies, each copy was shifted randomly on SO(2), and random noise was added to each of the shifted copies, yielding a dataset $\{s_i\}_{i=1}^{n}$ of $n=60$ signals. 
The problem is now to align and cluster the signals in the dataset.

This problem is simpler than  the Cryo-EM problem, but it contains the key components and allows us to construct a benchmark. We observe that the auto-correlation and bispectrum \cite{sadler1992shift} 
of signals are invariant to rotations; therefore, in the absence of noise, we can compute the auto-correlation or bispectrum of each signal in our dataset, and use these as ``signatures'' to cluster the signals. 
In the presence of noise, these signatures are distorted, leading to possible errors in clustering. We experimented with both auto-correlation and bispectrum; since the results were very similar in the two cases, and since bispectrum has certain theoretical advantages, we present the results for bispectrum here.

We implemented the SDP in (\ref{eq:alg:convexrelax:reduce2}) with balanced classes (\ref{eq:alg:size}) in Matlab, using CVX \cite{cvx,gb08}. 
For every pair of signals $s_i$ and $s_j$ we compute $f_{ij}$: 
\begin{equation}
  f_{ij}(g) = \| s_i - g \circ s_j \| ,
\end{equation} 
where $g \circ  s_j$ is the signal $s_j$ rotated by $g$. The rotation is implemented by multiplication by the appropriate phase in the Fourier domain. 
We construct the $n \times n$ matrices of coefficients $\hat{F}^{(k)}$ (the matrices for multireference alignment without classification); the elements in the $i,j$ position in the $k$ matrix is the $k$ element in the DFT of $f_{ji}$.
\begin{equation}
  \hat{f}^{(\cdot)}_{ij} = \mathcal{F}\left(f_{ji}\right) . 
\end{equation}
The non-negativity constraint is implemented using the Fejer kernel (see \cite{bandeira2015non}).

In order to study the performance of the algorithm, we focused on the classification aspect, which can be compared to clustering obtained through the use of bispectrum ``signatures.''
We computed the bispectrum of each signal in the dataset and also solved the SDP for the product NUG of this dataset.
For simplicity, we used the simple k-means to cluster the signals: first by the bispectral signature of each signal, and then by the columns of the matrix $X^{(0,1)}$ obtained by the SDP. 
For simplicity, we did not enforce equal cluster sizes in the k-means. 
We measured the fraction of signals that were misclassified (the clusters are recovered only up to permutation: even if the k-means find the correct clusters, the class ``labels'' are assigned arbitrarily. We computed the minimum error over all permutations of class labels).

We repeated the experiment $20$ times for every noise level. The results are presented in Figure \ref{fig:res1}. The experiment demonstrates that the product NUG achieves considerably better classification results in the presence of noise.

\begin{remark}
In the Cryo-EM problem, the images which we wish to align are different projections of the molecule $\mathcal{X}$.
While bispectrum and auto-correlation have been used to find images from the same plane (see \cite{zhao2014rotationally}), these signatures are not invariant to projections. 
Therefore, in the Cryo-EM problem, these signatures cannot be used for classification, so they do not provide an alternative for the product NUG discussed here. 

In other words, although the product NUG achieves better results than invariant signature based clustering in these experiments, its true importance is in cases where such alternative methods cannot be used. 
\end{remark}

\begin{figure}[tp]	
    \centering
    {\includegraphics[width=60mm,angle=0]{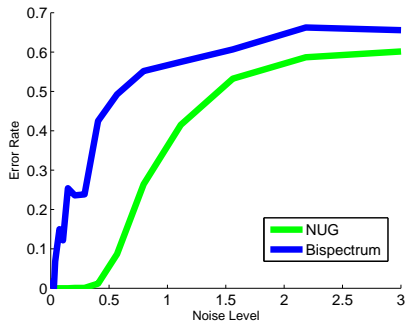}}
    \caption{Classification error vs. noise level, 4 balanced clusters}\label{fig:res1}
\end{figure}

%
\section{Summary and Future Work}\label{sec:conclusion}

The problem of simultaneous alignment and classification has been formulated as a Non-Unique Game, and an algorithm has been presented for solving the a convex relaxation of the problem. 
The algorithm has been demonstrated for the case of simultaneous alignment and classification of mixed signals on SO(2); and it is currently being adapted for the heterogeneity problem of Cryo-EM. 
It should be noted that SDPs like the one proposed here are difficult to scale using off-the-shelf solvers to very large problems, such as alignment of hundreds of thousands of images produced in modern Cryo-EM experiments. Nevertheless, special purpose solvers provide more scalability, the SDPs offer certificates of global optimality of solutions found using other approaches in some circumstances, they provide a benchmark for approximate optimizations, and they can be applied to reduced datasets (e.g. class averages of images). 

The approach discussed here can  be generalized to the case of continuous heterogeneity, where the molecules are not classified to distinct classes, but rather lie on continuum of states that can be parametrized (alternatively, the states are distinct, but related to some degree).
In this case, we follow similar ideas to those in this manuscript, however there are some additional details that require considerations in the choice of underlying groups and the structure of $f_{ij}$;
this case will be discussed in more detail in a future paper.

As discussed in Section \ref{sec:sizecontrol}, there are several variations of the control over the size of clusters. Furthermore, the same ideas can be used to control the distribution of the recovered rotation angles (for example, when the images can be assumed to come from approximately uniform distribution over SO(3)).

%
\section*{Acknowledgments}

The authors would like to thank Joakim And\'{e}n, Afonso Bandeira, Tejal Bhamre, Yutong Chen and Justin Solomon for their help. 

The authors were partially supported by Award Number
R01GM090200 from the NIGMS, FA9550-12-1-0317 and
FA9550-13-1-0076 from AFOSR, LTR DTD 06-05-2012 from the Simons Foundation, and the Moore Foundation Data-Driven Discovery Investigator Award.

Part of the work by RRL was done while visiting the Hausdorff Research Institute for Mathematics, as part of the Mathematics of Signal Processing trimester.


\bibliography{bib06}{}
\bibliographystyle{ieeetr}


%

\end{document}